\definecolor{light-gray}{gray}{0.85}
\theoremstyle{plain}
\newtheorem{theorem}{Theorem}[section]
\newtheorem{proposition}[theorem]{Proposition}
\newtheorem{lemma}[theorem]{Lemma}
\newtheorem{corollary}[theorem]{Corollary}
\theoremstyle{definition}
\newtheorem{definition}[theorem]{Definition}
\newtheorem{assumption}[theorem]{Assumption}
\theoremstyle{remark}
\begin{document}

\title{Provable Reinforcement Learning with a Short-Term Memory}

\author{%
Yonathan Efroni \thanks{Microsoft Research, NYC. Email: \texttt{jonathan.efroni@gmail.com}}
\and
 Chi Jin  \thanks{Princeton University. Email: \texttt{chij@princeton.edu}}
 \and
 Akshay Krishnamurthy \thanks{Microsoft Research, NYC. Email: \texttt{akshaykr@microsoft.com}}
 \and
 Sobhan Miryoosefi  \thanks{Princeton University. Email: \texttt{miryoosefi@cs.princeton.edu}}
}

\date{}
\maketitle

\begin{abstract}

Real-world sequential decision making problems commonly involve partial observability, which requires the agent to maintain a memory of history in order to infer the latent states, plan and make good decisions. Coping with partial observability in general is extremely challenging, as a number of worst-case statistical and computational barriers are known in learning Partially Observable Markov Decision Processes (POMDPs). Motivated by the problem structure in several physical applications, as well as a commonly used technique known as ``frame stacking'', this paper proposes to study a new subclass of POMDPs, whose \emph{latent states can be decoded by the most recent history of a short length $m$}. We establish a set of upper and lower bounds on the sample complexity for learning near-optimal policies for this class of problems in both tabular and rich-observation settings (where the number of observations is enormous). In particular, in the rich-observation setting, we develop new algorithms using a novel ``moment matching'' approach with a sample complexity that scales exponentially with the short length $m$ rather than the problem horizon, and is independent of the number of observations. Our results show that a short-term memory suffices for reinforcement learning in these environments.

\end{abstract}

\section{Introduction}
\label{sec:intro}

Reinforcement learning is a well-studied paradigm for sequential
decision making, in which an agent learns to make
decisions in a stateful environment to accumulate reward. The most
common framework for reinforcement learning---particularly for
theoretical analysis---is the Markov Decision Process (MDP), in
which the environment is summarized by a state that is observable
to the agent. 
One notable feature of the MDP is that the agent can be
\emph{memoryless}, meaning that it need not 
remember past states to make decisions in the present.  However, many
real world problems exhibit partial observability and require the
agent to maintain a memory of the past to infer the latent states, plan, and make good decisions.  These
problems are best modeled via the framework of Partially Observable
MDPs (POMDPs).

As a motivating example, consider a control task of navigating a robot that
perceives the environment through a visual system like a first-person
camera. 
Here, a single image may identify the agent's location, but it would
not identify the agent's velocity, which is necessary for deciding how much force should be applied in order to
accelerate or brake. For optimal control, the agent would
have to maintain a memory of past images and infer its velocity from
this historical information. This problem can be modeled as a POMDP
where the system state is the position and velocity of the
agent. However, the state cannot be inferred using a single image,
hence it is \emph{partially observable}.




Maintaining a memory and reasoning over histories in POMDPs is notoriously
challenging, as evidenced by a number of complexity-theoretic barriers: computing the optimal policy (or planning) is computationally intractable~\citep{papadimitriou1987complexity} and learning an unknown POMDP
incurs a sample complexity that scales exponentially with the
horizon~\citep{mossel2005learning,jin2020sample}. These lower bounds 
often involve constructions that require the agent to reason over very long
histories. 
However, they are worst-case in nature, so they  
leave open the possibility of obtaining positive results for subclasses of POMDPs with special structure of practical interest.

One such structure concerns applications of POMDPs where the agent only needs a \emph{short-term memory}. 
This structure holds in our motivating example, since the velocity can be recovered from just the most recent images. 
Short-term memory is also frequently used in the design of practical
algorithms, which concatenate observations from
the most recent time steps and use them to make decisions---a
technique called
``frame-stacking''~\citep{mnih2013playing,mnih2015human,hessel2018rainbow}. This
gives rise to a natural question: \emph{Can we develop a theoretical
framework and design provably efficient algorithms for reinforcement learning
with a short-term memory?}


\paragraph{Our contributions.}
In this paper, we address the question above by proposing a new class of models---$m$-step decodable POMDPs. This class is a subclass of
general POMDPs where the latent state can be determined by the observations and actions of the $m$ most recent time steps via an \emph{unknown} decoding function $\phi^\star$ (see Assumption \ref{asm:nstep_decode}). 

As a warm-up example, we first consider the tabular
setting, where the number of states, observations, and actions, are
all relatively small. Here a simple technique which stacks the observations and actions in the $m$ most recent steps into a new ``mega''-states yields an algorithm with sample complexity
$\mathcal{O}(H(OA)^m) $  where $O,A$ are the number of
observations and actions respectively and $H$ is the episode
length. We also show an $\Omega(A^m)$ lower bound, establishing that
an exponential dependence on $m$ is indeed necessary.

Our main result concerns the rich-observation setting where the
observation space can be arbitrarily complex ($O$ is arbitrarily large) and one must use function
approximation for generalization. We present a clean solution to this problem with a simple variant of the \golf~algorithm \citep{jin2021bellman}, which was originally proposed for RL with general function approximation in the observable/Markovian setting. We show that our algorithm finds a near-optimal policy within {$\mathcal{O}(\textrm{poly}(H)  A^m S \cdot \log |\mathcal{F}|)$} samples, where $S$ is the number of latent states and $|\mathcal{F}|$ is cardinality of the function class. Most importantly, our sample complexity does not depend on the number of observations $O$. 
We further extend our result to the setting where the latent dynamics correspond to a linear MDP, with $S$ in the sample complexity replaced by latent dimension $d$.


Our results in the rich observation setting crucially rely on a novel
concept that we call the ``moment matching policy,'' which breaks
historical dependencies while matching the joint distribution of
states, observations, and actions for a short time interval (See
Section \ref{sec:moment_matching}).  These policies enable a low-rank
or bilinear decomposition of the Bellman error of any value function
in the POMDP, which is essential for obtaining sample efficient
results in the rich observation
setting~\citep{jiang2017contextual,jin2021bellman,du2021bilinear}. As
such, the moment matching policies might be of independent interest
for future research in partial observability.

\subsection{Related Work}
\label{sec:related_work}
Partial observability is a central challenge in practical
reinforcement learning settings and, as such, it has been the focus of
a large body of empirical work. The two most popular high-level
approaches are to use recurrent or other ``temporally extended''
neural
architectures~\citep{hausknecht2015deep,zhu2017improving,igl2018deep,hafner2019dream},
or to employ feature engineering~\citep{mccallum1993overcoming}, for
example by providing the most recent observations as input to the
agent~\citep{mnih2013playing,mnih2015human,hessel2018rainbow}. 
However, we are not aware of any theoretical treatment of these
methods in the RL context.

Turning to theoretical results, two lines of work are related to our
own. The first addresses RL with partial
observability.~\citet{kearns1999approximate,kearns2002sparse,evendar2005reinforcement}
provide sparse sampling techniques that attain $A^H$-type sample
complexity for various POMDP tasks, including without resets. These
bounds have an undesirable exponential dependence on the horizon,
which we show can be removed in some special cases. A more recent line
of
work~\citep{azizzadenesheli2016reinforcement,guo2016pac,jin2020sample}
use method of moment estimators (based on spectral methods for
learning latent variable models~\citep[c.f.,][]{anandkumar2014tensor}
to obtain guarantees in \emph{undercomplete} tabular POMDPs. However,
undercompleteness, which means that the emission matrix is robustly
rank $|O|$, need not hold in our setting, so these results are
orthogonal to ours.

The second line of work concerns rich observation RL, where the
observation space can be infinite and arbitrarily complex, in (for the
most part) \emph{Markovian} environments. These works provide
structural conditions that permit sample efficient RL with function
approximation~\cite{jiang2017contextual,sun2019model,jin2021bellman,du2021bilinear,foster2021statistical}
 as well as algorithms that are
provably efficient in some special
cases~\citep{du2019provably,misra2020kinematic,agarwal2020flambe,uehara2021representation}.
However, as we will see, these structural conditions are not satisfied
in our POMDP model so these results do not directly apply.

Outside of RL settings, the use of memory is prevalent in controls and
time series
prediction~\citep{ljung1998system,box2015time,hamilton1994time},
dating back to the seminal work of~\citet{kalman1960new}. Short-term
memory is explicit in several autoregressive models, such as the AR
and ARMA models. It is also classical to leverage memory in many
control-theoretic settings. More recently, short-term memory has been
employed in control settings, where one can use stability arguments to
show that a short memory window suffices to approximate the optimal
policy~\citep{verhaegen1993subspace,arora2018towards,agarwal2019online,oymak2019non,simchowitz2019learning}. These
ideas provide further motivation for our study but the techniques
developed in these continuous settings do not seem useful for discrete
RL problems where exploration is challenging.

\section{Preliminaries}
\label{sec:prelim}


\paragraph{Notation.} 
We use $[H]$ to denote the set $\{1,\ldots,H\}$.  For any indexed
sequence $a_1, a_2, \ldots$, we use $a_{i:j}$ to denote the
subsequence $( a_{\max\{1,i\}},\ldots,a_{\max\{1,j\}})$ for any $i,
j \in \mathbb{Z}$ with $i\le j$. 
We adopt the standard
big-oh notation and write $f = \tilde{\Ocal}(g)$ to denote that
$f=\Ocal(g \cdot \max\{1,\mathrm{polylog}(g)\})$.

\paragraph{POMDPs.} We consider an episodic Partially Observable Markov Decision Process (POMDP), which can be specified by $\Mcal = (\Scal, \Ocal,\Acal, H, \PP, \OO, r)$. Here $\Scal$ is the \emph{unobservable} state space, $\Ocal$ is the observation space, $\Acal$ is the action space, and $H$ is the horizon. $\PP=\{\PP_h\}_{h=1}^H$ is a collection of \emph{unknown} transition probabilities with $\PP_h(s' \mid s,a)$ equal to the probability of transitioning to $s'$ after taking action $a$ in state $s$ at the $h^{\text{th}}$ step. $\OO = \{\OO_h\}_{h=1}^H$ are the \emph{unknown} emissions with $\OO_h(o \mid s)$ equal to probability that the environment emits observation $o$ when in state $s$ at the $h^{\text{th}}$ step. $r = \{r_h: \Ocal \rightarrow [0,1]\}_{h=1}^H$ are the deterministic reward functions.\footnote {We study deterministic reward for simplicity. Our results readily generalize to random rewards.} Throughout the paper, we assume that $\sum_{h=1}^H r_h(o_h) \leq 1$ almost surely. We assume our action space is finite, $|\Acal| \leq A$, and in all sections except \cref{sec:linear}, we assume our state space is also finite, $|\Scal| \leq S$.

\begin{figure}
      \begin{center}
      \includegraphics[width=0.45\textwidth]{./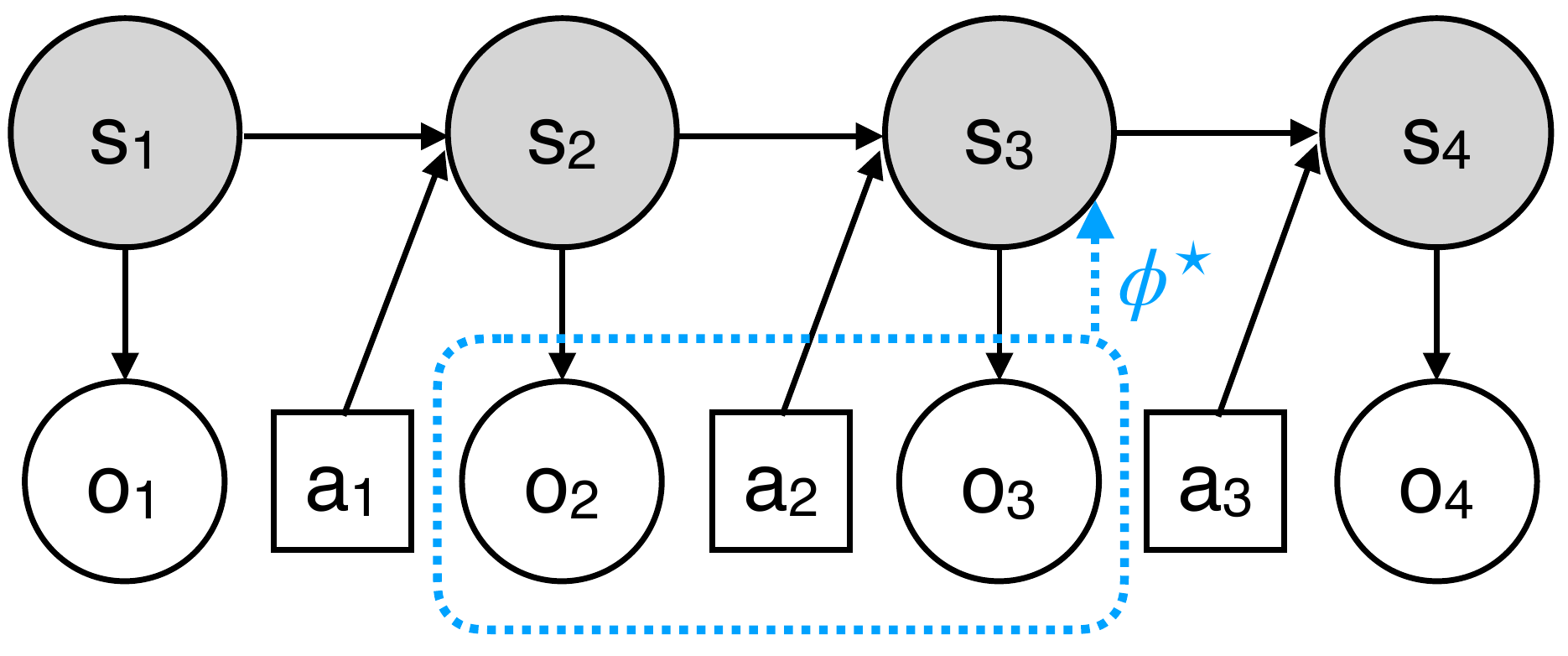}
      \caption{A schematic of a 2-step decodable POMDP. The latent state $s_h$ can be recovered using only $o_{h-1},a_{h-1},o_h$, so a short-term memory suffices for decision making.}
      \label{fig:decodable_fig}
      \end{center}
\end{figure}

\paragraph{Interaction protocol.} In a POMDP, the states are hidden and unobservable; i.e., the agent is only able to see the observations and its own actions. 
Each episode starts with initial state $s_1$ which is sampled from some \emph{unknown} initial distribution.
Then, at each step $h \in [H]$, the environment emits observation $o_h \sim \OO_h(\cdot \mid s_h)$, the agent observes $o_h \in \Ocal$, receives reward $r_h(o_h)$, and takes action $a_h \in \Acal$ causing the environment to transition to $s_{h+1} \sim \PP(\cdot \mid s_h,a_h)$.


\paragraph{Multi-step decodability.} We first define the notion of reachable trajectories.
\begin{definition}[Reachable trajectories] We say a trajectory $\tau = (s_1, o_1, a_1, r_1, s_2,\ldots,s_H,o_H,a_H,r_H)$ is \emph{reachable} if the probability $P((s,o)_{1:H}|a_{1:H}) = (\prod_{h=1}^H \OO(o_h|s_h))\cdot (\prod_{h=1}^{H-1} \PP(s_{h+1}|s_h, a_h))$ is strictly positive.
\end{definition}

Now we present the key structural assumption of this paper, which
assumes that a suffix of length $m$ of the history suffices to decode
the latent state.  We use $\Zcal_h$ to denote the set of suffixes at
step $h$, given by $\Zcal_h =
(\Ocal \times \Acal)^{\min\{h-1,m-1\}} \times \Ocal$.\footnote{When
$h \le m$, this suffix includes the entire history starting from time
step $1$.} Additionally, since it will appear frequently in subscripts
in the sequel, let $m(h) = \min\{h-m+1,1\}$.

\begin{assumption}[$m$-step decodability]
\label{asm:nstep_decode}
There exists an \emph{unknown} decoder $\phi^\star=\{\phi^\star_h: \Zcal_h \rightarrow \Scal \}_{h=1}^H$ such that for every \emph{reachable} trajectory $\tau = (s, o, a)_{1:H}$, we have $s_h = \phi^\star_h(z_h)$ for all $h \in [H]$, where $z_h = ((o, a)_{m(h):h-1},o_h)$.
\end{assumption}

We call a POMDP satisfying \Cref{asm:nstep_decode} an \textbf{$m$-step decodable POMDP}. 
An example with $m=2$ is illustrated in \Cref{fig:decodable_fig}.
Note that restricting decodability to only hold on reachable sequences results in a weaker assumption, which can include more practical settings.

Our model is a generalization of the block Markov decision process
(BMDP) \citep{jiang2017contextual,du2019provably}, which corresponds to the case where $m=1$. 
However, we emphasize that when $m=1$ there is no partial
observability since the current observation suffices for decoding the
hidden state. Thus the BMDP model does not require memory while, for
$m>1$, our model does.

\paragraph{Policies and value functions.}
For $m$-step decodable POMDPs, we consider the class of $m$-step policies. 
An $m$-step policy $\pi$ is a collection $\pi = \{ \pi_h: \Zcal_h \rightarrow \Acal \}$ that maps suffixes of length $m$ of the history to actions. 
The agent follows policy $\pi$ by choosing action $a_h = \pi_h(z_h)$ at the $h^{\textrm{th}}$ step, where $z_h = ((o,a)_{m(h):h-1},o_h) \in \Zcal_h$. 
We denote $V^\pi$ as the value for policy $\pi$, defined as the expected total reward obtained when following policy $\pi$, that is $V^\pi = \EE_\pi[\sum_{h=1}^H r_h(o_h)]$. 

We can similarly define the value at step $h$ to be the expected future reward when starting from step $h$. 
While this value may depend on the entire history in general, it is not hard to show that in $m$-step decodable POMDPs with an $m$-step policy $\pi$, this value only depends on the suffix of length $m$. 
Mathematically, we can define $V^\pi_h : \Zcal_h \rightarrow [0,1]$ to be the value function at step $h$ for (the $m$-step) policy $\pi$ as
\begin{equation*}
	V^\pi_h(z) \defeq E_\pi \big[ \sum_{h'=h+1}^H r_{h'}(o_{h'}) \mid z_h = z \big].
\end{equation*}
Similarly we define $Q^\pi_h : \Zcal_h \times \Acal \rightarrow [0,1]$ to be the $Q$-value function at step $h$ for (the $m$-step) policy $\pi$ as
\begin{equation*}
	Q^\pi_h(z,a) \defeq E_\pi \big[ \sum_{h'=h+1}^H r_{h'}(o_{h'}) \mid z_h = z, a_h = a \big].
\end{equation*}

Furthermore, \cref{asm:nstep_decode} guarantees that there exists an $m$-step policy $\pi^\star$ which is optimal in the sense $V^{\pi^\star} = \max_{\pi \in \Pi}V^{\pi}$ where $\Pi$ is the class of all policies, which may depend on the entire history. We use $V^\star$, $V^\star_h$, and $Q^\star_h$ to denote $V^{\pi^\star}$, $V^{\pi^\star}_h$, and $Q^{\pi^\star}_h$ respectively.

We define the \emph{Bellman operator} $\Tcal_h$ at step $h$ as
\begin{align*}
	(\Tcal_h g)(z,a) \defeq \EE\big[r_{h+1}(o_{h+1})+ \max_{a_{h+1} \in \Acal} g(z_{h+1},a_{h+1})  
	\mid z_h = z, a_h = a \big],
\end{align*}
for any function $g: \Zcal_{h+1} \times \Acal \rightarrow [0,1]$ that depends on $m$-step suffix. It is not hard to check that $Q^\star$ satisfies the Bellman optimality equation $Q^\star_h(z,a) = (\Tcal_h Q^\star_{h+1})(z,a)$ for all $h \in [H]$ and $(z,a) \in \Zcal_h \times \Acal$. 

Finally, for two non-stationary policies $\pi_1,\pi_2$ we use the notation $\pi_1\circ_{t}\pi_2$ be a non-stationary policy that executes $\pi_1$ for $t-1$ time steps and then, starting from the $t^{\textrm{th}}$ time step, executes $\pi_2$.

\paragraph{Learning objective.} Our objective is to learn an $\epsilon$-optimal policy $\widehat{\pi}$, which satisfies $V^{\widehat{\pi}} \geq V^\star - \epsilon$.


\subsection{Function approximation}
In the function approximation setting, the learner is given a function
class $\Fcal = \Fcal_1 \times \dots \times \Fcal_H$, where
$\Fcal_h \subseteq (\Zcal_h \times \Acal \rightarrow [0,1])$ consists
of candidate functions to approximate $Q^\star_h$---the optimal
$Q$-value function at step $h$. Without loss of generality we assume
that $f_{H+1} \equiv 0$. We present two assumptions that are commonly
adopted in the literature to avoid challenges associated with
reinforcement learning with function approximation (e.g., the hardness
results in \citealt{krish2016hardness,weisz2021exponential}).

\begin{assumption}[Realizability]
\label{asm:realizability}
$Q^\star_h \in \Fcal_h$ for all $h \in [H]$.
\end{assumption}
This assumption requires that our function class $\Fcal$ in fact contains the the optimal $Q$-value function, $Q^\star$.

\begin{assumption}[Generalized Completeness]
\label{asm:generalized_completeness}	
$\Tcal_h f_{h+1} \in \Gcal_h$ for all  $h \in [H]$ and $f_{h+1} \in \Fcal_{h+1}$,
where $\Gcal = \Gcal_1 \times \dots \times \Gcal_H$ is an auxiliary function class provided to the learner, with $\Fcal_h \subseteq \Gcal_h \subseteq (\Zcal_h \times \Acal \rightarrow [0,1])$.
\end{assumption}
The \emph{generalized completeness} \citep{antos2008learning,chen2019info} assumption requires the auxiliary function class $\Gcal$ to be rich enough so that applying the Bellman operator on any function in the original class $\Fcal$ results in a function in $\Gcal$. 
If we choose $\Gcal = \Fcal$, \cref{asm:generalized_completeness}
reduces to the standard completeness assumption, but separating the
two classes provides more flexibility.


We use covering numbers to capture the statistical complexity, or effective size, of the classes $\Fcal$ and $\Gcal$.
\begin{definition}[$\epsilon$-cover]
The $\epsilon$-covering	of a set $\Xcal$ under a metric $\rho$, denoted by $\Ncal\rbr{\Xcal, \epsilon,\rho}$ is the minimum integer $n$ such that there exists a subset $\Xcal_0 \subseteq \Xcal$  with $\abr{\Xcal_0}=n$ and for any $x\in \Xcal$ there exists $y\in \Xcal_0$  such that $\rho(x,y)\leq \epsilon$. 
\end{definition}
In this work, for the function class $\Fcal = \Fcal_1 \times \dots \times \Fcal_H$, we use the metric $\rho(f^{(1)}- f^{(2)}) = \max_{h \in H} \norm{f_h^{(1)} - f_h^{(2)}}_{\infty}$ where $f^{(1)},f^{(2)}\in \Fcal$. Since this metric is fixed throughout the paper, we use a simpler notation of $\Ncal_{\Fcal}(\epsilon)$ to denote the $\epsilon$-covering number of $\Fcal$.

Finally, let $\pi_f = \{z_h \mapsto \arg\max_{a \in \Acal} f_h(z_h,a)\}^H_{h=1}$ denote the greedy policy with respect to $f\in \Fcal$, where ties are broken in a canonical fashion.


\section{Warmup: Tabular Case}
\label{sec:megastate}

We start by considering a basic setting where the numbers of states, actions, and observations are all finite and small, so we additionally have $|\Ocal| \leq O$.
In this setting, we describe a simple reduction from an $m$-step decodable POMDP to a new MDP with augmented states.  With this reduction at hand, we can to apply any RL algorithms designed for the fully observable setting to learn a near optimal policy.

In the reduction to an MDP, instead of using only the current
observation $o_h$ as the state at time $h$, we use the $m$-length
suffix of observations and actions $z_h$. We refer to such a suffix as
a \emph{megastate}. Formally, the reduction uses a time-dependent
extended state space $\Scal^{m,h} = \Zcal_h$, and 
the next result establishes that $\Scal^{m,h}$ induces Markovian
dynamics. Additionally, an optimal policy of this MDP is also an
optimal policy of the original $m$-step decodable POMDP.\footnote{All
proofs are deferred to the appendices.}

\begin{proposition}[Megastate MDP]\label{prop: megastate MDP}
The state space $\Scal^{m,h}$ induces Markovian dynamics $\PP^m$ and
reward $r^m$. Let this MDP be $\Mcal^m = \rbr{\Scal^{m,h}, \Acal,
H,\PP,r}$. An optimal policy of $\Mcal^m $ is an optimal policy
of the $m$-step decodable POMDP.
\end{proposition}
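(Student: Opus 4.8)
The plan is to verify three things in turn — that the reward is a function of the megastate, that the megastate process is Markov under any fixed action, and that the value of every $m$-step policy is preserved by the reduction — and then to deduce the optimality claim from the fact (recorded right after \cref{asm:nstep_decode}) that some $m$-step policy attains $\max_{\pi\in\Pi}V^\pi$.

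I would first dispose of the reward: $r_h$ depends only on $o_h$, which is the last coordinate of $z_h$, so $r^m_h(z_h)\defeq r_h(o_h)$ is well defined on $\Scal^{m,h}=\Zcal_h$. Then, for Markovianity, fix any reachable history through the action $a_h$ and consider $z_{h+1}$, the length-$m$ suffix at step $h+1$. Its observation/action block is a deterministic function of $(z_h,a_h)$ — append the pair $(o_h,a_h)$ to the block stored in $z_h$ and, if the window is already full, drop the oldest pair — so the only randomness left in $z_{h+1}$ sits in its last coordinate $o_{h+1}$. I would then invoke the POMDP's conditional-independence structure: given $(s_h,a_h)$ the state $s_{h+1}\sim\PP_h(\cdot\mid s_h,a_h)$ is independent of the earlier history, and given $s_{h+1}$ the observation $o_{h+1}\sim\OO_{h+1}(\cdot\mid s_{h+1})$ is independent of the earlier history. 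By \cref{asm:nstep_decode}, along this reachable history $s_h=\phi^\star_h(z_h)$, so conditioning on $z_h$ pins down $s_h$; hence, conditionally on $(z_h,a_h)$, the law of $o_{h+1}$ — and thus of $z_{h+1}$, whose block is already determined by $(z_h,a_h)$ — is
\[
\textstyle\sum_{s'\in\Scal}\PP_h(s'\mid\phi^\star_h(z_h),a_h)\,\OO_{h+1}(o_{h+1}\mid s'),
\]
which depends on the history only through $(z_h,a_h)$. This is the Markov property; I would call the resulting kernel $\PP^m$, note that the reduced initial law is just that of $z_1=o_1$, and let $\Mcal^m$ be the MDP with dynamics $\PP^m$ and reward $r^m$.

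Finally I would transfer values and conclude. A memoryless policy of $\Mcal^m$ is exactly a map $z_h\mapsto a_h$, i.e. an $m$-step policy $\pi$, and conversely; by induction on $h$ using the transition identity above, running $\pi$ in $\Mcal^m$ induces the same distribution over $(z_1,a_1,\dots,z_H,a_H)$ as running the $m$-step policy $\pi$ in the POMDP induces over the matching suffix/action sequence, and since $r^m_h(z_h)=r_h(o_h)$ the two models assign $\pi$ the same value. A finite-horizon MDP admits a memoryless policy attaining its optimal value, so the optimal value of $\Mcal^m$ equals $\max\{V^\pi:\pi\text{ an }m\text{-step policy}\}$; by the discussion following \cref{asm:nstep_decode} this last quantity equals $V^\star=\max_{\pi\in\Pi}V^\pi$. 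Hence an optimal (memoryless) policy of $\Mcal^m$ has POMDP-value $V^\star$ and is optimal for the $m$-step decodable POMDP.

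I expect the Markovianity step to be the main obstacle: it requires conditioning on an arbitrary reachable history and threading the POMDP's conditional-independence relations through cleanly, while being careful that \cref{asm:nstep_decode} supplies $s_h=\phi^\star_h(z_h)$ only on reachable trajectories — so I would restrict $\Scal^{m,h}$ to reachable suffixes (equivalently, extend each $\phi^\star_h$ arbitrarily off the reachable set). The reward repackaging and the policy/value correspondence are routine bookkeeping.
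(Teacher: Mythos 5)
Your proposal is correct and follows essentially the same route as the paper: reward is a function of the megastate because $o_h$ is one of its coordinates, Markovianity follows because the observation/action block of $z_{h+1}$ is determined by $(z_h,a_h)$ and the law of $o_{h+1}$ factors through $s_h=\phi^\star_h(z_h)$ via the latent transition and emission, and optimality is transferred using the fact that an optimal $m$-step (equivalently, latent-state-based) policy exists and is executable on megastates. Your explicit caveat about restricting to reachable suffixes and your value-preservation induction are slightly more careful than the paper's write-up but do not constitute a different argument.
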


We refer to $\Mcal^m$ as the megastate MDP. With this proposition, we
can apply any RL algorithm (e.g., UCB-VI by \citealt{azar2017minimax}) to
the megastate MDP to learn a near optimal policy for the original
POMDP. Since the cardinality of the state space of $\Mcal^m$ at each
step is $\max_{h \in [H]}\abr{\Scal^{m,h}} \leq O^mA^{m-1}$. We
immediately obtain the following result.

\begin{corollary}[Upper bound, tabular setting]\label{corr: megastate}
For any $\epsilon,\delta\in (0,1)$, UCB-VI applied on to the megastate-MDP $\Mcal^m$ learns an $\epsilon$-optimal policy for the original $m$-step decodable POMDP with probability greater than $1-\delta$ given $O\rbr{O^mA^mpoly(H) \log\rbr{1/\delta}/\epsilon^2}$ samples.
\end{corollary}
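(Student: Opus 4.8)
The plan is to combine the structural reduction in \Cref{prop: megastate MDP} with an off-the-shelf PAC/regret guarantee for tabular episodic reinforcement learning. First I would record a slightly stronger consequence of \Cref{prop: megastate MDP} than its stated form: under the canonical identification of an $m$-step suffix $z_h \in \Zcal_h$ with a megastate, the proof couples trajectories of $\Mcal^m$ and of the POMDP, so that any $m$-step policy $\pi$ induces the same joint law over $(o_h,a_h,r_h)_{h=1}^H$ in the two models. Consequently $V^\pi$ is identical whether $\pi$ is regarded as a policy of $\Mcal^m$ or of the POMDP; in particular $V^\star$ agrees, and it suffices to output an $\epsilon$-optimal policy of $\Mcal^m$. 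Such a policy is moreover directly executable in the POMDP, because the megastate $z_h = ((o,a)_{m(h):h-1},o_h)$ is assembled purely from observable quantities, and the megastate reward $r^m_h(z_h) = r_h(o_h)$ is a deterministic function of $z_h$ since $o_h$ is the last coordinate of $z_h$.

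Next I would bound the size of $\Mcal^m$ and invoke a known algorithm. At step $h$ the megastate space has cardinality $|\Scal^{m,h}| = |\Zcal_h| = |\Ocal\times\Acal|^{\min\{h-1,m-1\}}\,|\Ocal| \le O^m A^{m-1}$, the action space has size at most $A$, the horizon is $H$, and the per-episode reward is bounded by $1$. Feeding $\Mcal^m$ to an optimistic algorithm such as UCB-VI~\citep{azar2017minimax} as a black box, after $K$ episodes it returns, with probability at least $1-\delta$, policies $\pi^{(1)},\dots,\pi^{(K)}$ with $\mathrm{Regret}(K) = \sum_{k=1}^K (V^\star - V^{\pi^{(k)}}) = \tilde\Ocal\big(\sqrt{\mathrm{poly}(H)\, O^m A^{m-1}\cdot A\cdot K}\,\mathrm{polylog}(1/\delta)\big)$, where the $\mathrm{poly}(H)$ factor and the precise $H$-dependence absorb the rescaling associated with our normalization $\sum_h r_h \le 1$ (rather than $r_h \le 1$) and the step-dependent state space.

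Finally I would convert the regret bound to a PAC bound by the standard online-to-batch argument: taking $\widehat\pi$ uniform over $\{\pi^{(1)},\dots,\pi^{(K)}\}$ gives $\EE[V^\star - V^{\widehat\pi}] \le \mathrm{Regret}(K)/K$ (alternatively one selects the empirically best of them with $\tilde\Ocal(\log(1/\delta)/\epsilon^2)$ extra rollouts to obtain a high-probability statement). Setting the right-hand side to $\epsilon$ yields $K = \tilde\Ocal\big(O^m A^m\,\mathrm{poly}(H)\log(1/\delta)/\epsilon^2\big)$ episodes, hence $\Ocal(O^m A^m\,\mathrm{poly}(H)\log(1/\delta)/\epsilon^2)$ samples in total after folding $H$ into $\mathrm{poly}(H)$; by the value equivalence above, $\widehat\pi$ is then $\epsilon$-optimal for the original $m$-step decodable POMDP with probability at least $1-\delta$. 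I expect the only real friction to be bookkeeping rather than mathematics—ensuring the cited UCB-VI guarantee is stated (or is trivially adapted) for time-inhomogeneous episodic MDPs with a step-dependent state space and total reward normalized to $1$, and cleanly packaging the regret-to-PAC conversion together with the failure probability $\delta$; everything else is immediate from \Cref{prop: megastate MDP} and the cardinality count.
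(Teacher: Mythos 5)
Your proposal is correct and follows essentially the same route as the paper: invoke \cref{prop: megastate MDP} to reduce to the megastate MDP with at most $O^mA^{m-1}$ states per step, apply the UCB-VI regret bound of \citet{azar2017minimax}, and convert regret to a PAC guarantee via online-to-batch. The paper's proof is just a terser version of this argument (it likewise leaves the reward-normalization and time-inhomogeneity bookkeeping implicit inside the $\mathrm{poly}(H)$ factor).
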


We remark that the sample complexity scales exponentially with the decoding length $m$. 
The next lower-bound verifies the necessity of the $O(A^m)$ term in the upper bound, so some exponential dependence is required. 
It follows by a reduction to the lower bound of~\citet{krish2016hardness}; we show that their construction is, in fact, an $m$-step decodable POMDP. 
This yields the following result.

\begin{proposition}[Lower bound, tabular setting] \label{prop: tabular_lower}
There exists an $m$-step decodable MDP that requires at least $\Omega(A^m/\epsilon^2)$ samples to find an $\epsilon$-optimal policy.
\end{proposition}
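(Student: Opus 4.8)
The plan is not to build anything new but to repackage the lower-bound instance of \citet{krish2016hardness} and check that it already falls inside our model. Recall their construction: a family of tabular rich-observation MDPs, indexed by an unknown ``golden'' action sequence $g \in \Acal^{m}$, which behaves like a combination lock of depth $m$ --- there is a ``live'' latent state that one stays in iff one has played the golden prefix so far, all rewards are $0$ except at the terminal step, where playing the full golden sequence yields an expected reward larger by $\epsilon$ than any other behaviour (the $\epsilon$ gap being injected either through a Bernoulli reward or, to keep rewards deterministic, through an $\epsilon$-biased coin in the terminal emission), and --- crucially --- the emissions carry no information about $g$: observations at a given step have the same distribution whether one is ``live'' or ``dead''. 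For the instance with golden sequence $g$ we have $V^\star = \Theta(1) + \epsilon$, achieved only by the policy that replays $g$, while any policy that has not replayed $g$ has value $\Theta(1)$; hence an $\epsilon/2$-optimal policy must identify $g$.

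The first step is to verify that every member of this family satisfies \cref{asm:nstep_decode} with the stated $m$, i.e., that the ``progress'' latent state at step $h$ is a deterministic function of the length-$m$ suffix $z_h = ((o,a)_{m(h):h-1},o_h)$. Because the lock has depth $m$ and the only latent information is how far along the golden prefix one is (together with the terminal flag), this state is determined by the recent action history already contained in $z_h$; the decoder $\phi^\star_h$ simply reads the action coordinates of $z_h$ (with the start-of-episode padding convention when $h \le m$) and compares them against the candidate golden prefix. I would also record the routine bookkeeping: the instance is tabular ($|\Scal|,|\Ocal|$ are small, $|\Acal| = A$), and an optimal $m$-step policy exists --- it is exactly ``replay $g$'' --- so restricting to $m$-step policies is without loss here and \Cref{asm:nstep_decode} is genuinely in force.

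The remaining step is the information-theoretic lower bound, which is exactly the needle-in-a-haystack argument already present in \citet{krish2016hardness} specialized to depth $m$, so I would cite it rather than redo it: since the emissions are uninformative about $g$, a learner's interaction is statistically equivalent to best-arm identification over $A^m$ arms with a unique $\epsilon$-better arm, and averaging over a uniformly random $g$ and applying Le Cam / Fano shows that $\Omega(A^m/\epsilon^2)$ episodes are necessary to output an $\epsilon/2$-optimal policy with constant probability; rescaling $\epsilon$ by a constant yields the claim. The main obstacle is the verification in the second step: one must make sure that the \citet{krish2016hardness} construction, with its depth parameter set so that the bound reads $A^m$, genuinely decodes from an $m$-length window on \emph{all reachable trajectories} (including the boundary steps $h \le m$), since decodability is precisely the structural property their construction was designed to violate in the general rich-observation setting. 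Once this is confirmed the hardness transfers immediately, and everything else is routine.
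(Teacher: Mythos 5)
Your proposal is correct and follows essentially the same route as the paper: both reduce to the combination-lock construction of \citet{krish2016hardness} with depth $m$, observe that the ``live/dead'' latent state is decodable from the action history contained in $z_h$ (which, since $H\le m$, is the full history at every step), and invoke the embedded $A^m$-armed best-arm-identification lower bound to obtain $\Omega(A^m/\epsilon^2)$. Your treatment is if anything slightly more careful than the paper's sketch on two points the paper glosses over --- the need for a stochastic ($\epsilon$-gap) reward to obtain the $1/\epsilon^2$ factor, and the explicit check of decodability at the boundary steps $h\le m$.
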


Thus the $A^m$ dependence in the megastate reduction is optimal,
although it is not clear whether the $O^m$ dependence is necessary, which
we discuss in more detail in \Cref{sec:conclusion}. Regardless,
the megastate reduction is a reasonable approach for $m$-step
decodable POMDPs when the observation space is small, but, in many
applications, the observations represent complex objects (like images
or high-dimensional data) so that even linear in $O$ dependence is
unsatisfactory.
Such problems lie outside the
scope of tabular methods, and a fundamentally different approach is
required.


\section{Main results}
\label{sec:golf}

In this section we present our main results which address
the \emph{rich observation} setting, where the number of observation
$O$ is extremely large or infinite. The standard approach to tackle
such problems is via \emph{value function approximation}: we assume
access to a function class $\Fcal$ of candidate $Q$-value functions.
Given such a class, the goal is to learn a near-optimal policy with
sample complexity scaling with the statistical complexity of
$\Fcal$---in our case the log covering number
$\log \Ncal_{\Fcal}$---but independent of the size of the observation
space. In this section, we develop an algorithm for rich observation
$m$-step decodable POMDPs and analyze its sample complexity.



\begin{algorithm*}[t]
\caption{$m$-\golf : \golf\ for $m$-step decodable POMDP} \label{alg:golf}
 \begin{algorithmic}[1]
 \STATE \textbf{Initialize}:  $\Dcal_1,\dots,\Dcal_H\leftarrow \emptyset$, $\Bcal^0 \leftarrow \Fcal$.
 \STATE \textbf{Estimate} value of initial state by collecting $K_{\rm est}$ episodes and only keeping their first observations, denoted by $\hat{o}^1_1,\dots,\hat{o}^{K_{\rm est}}_1$. For $f \in \Fcal$, define
 \begin{equation*}
 	\hat{f}_1 = (1/K_{\rm est}) \sum_{i=1}^{K_{\rm est}} f(\hat{o}^i_1,\pi_f(\hat{o}^i_1))
 \end{equation*}
 \FOR{\textbf{epoch} $k$ from $1$ to $K$} 
 \STATE \textbf{Choose policy} $\pi^k = \pi_{f^k}$, where $f^k = \argmax_{f \in \Bcal^{k-1}} \hat{f}_1$. 
 \label{line:alg_golf_optimistic}
\FOR{\textbf{step} $h$ from $1$ to $H$}
\STATE \textbf{Collect}  $z_h=(o_{h-m+1},a_{h-m+1},\dots,o_h)$, $a_h$, $r_h$, and $o_{h+1}$ by executing $\pi^k$ at step $1,\ldots,h-m$ and taking  action uniformly at random at step $h-m+1,\dots,h$. 
\STATE \textbf{Augment} $\Dcal_h=\Dcal_h\cup (z_h,a_h,r_h,o_{h+1})$ for all $h\in[H]$.
\ENDFOR

\STATE \textbf{Update}
\begin{equation*}
	\Bcal^{k}=\left\{ f \in \Fcal:\  \Lcal_{\Dcal_h}(f_h,f_{h+1}) \leq \inf_{g \in \Gcal_{h}} \Lcal_{\Dcal_h}(g,f_{h+1}) + \beta \ \mbox{for all }h\in[H]\right\},
\end{equation*}
\vspace{-3mm}
\hspace{+10mm}
\begin{equation*}
\mbox{where }	\Lcal_{\Dcal_h}(\xi_{h},\zeta_{h+1}) = \sum_{(z_h,a_h,r_h,o_{h+1}) \in \Dcal_h}[\xi_h(z_h,a_h)-r_h -\max_{a' \in \Acal} \zeta_{h+1}(z_{h+1},a')]^2.
\end{equation*}
\ENDFOR
\STATE \textbf{Output} $\pi^{\rm out}$ uniform mixture policy over $\{\pi^k\}_{k=1}^{K}$.
 \end{algorithmic}
\end{algorithm*}

Our algorithm, which we call $m$-\golf, is displayed
in \Cref{alg:golf}. It is an adaptation of the \golf~algorithm,
developed by~\citet{jin2021bellman}, for the rich observation MDP
setting. $m$-\golf~itself differs from \golf~only in one seemingly
minor way, although this is quite critical for our analysis. Before
turning to this difference, let us review the high-level algorithmic
approach.

\golf, and $m$-\golf, are optimistic algorithms that maintain a 
confidence-set of plausible $Q$-value functions, and act
optimistically with respect to this set. Given a function class
$\Fcal$, we first collect a few observations $o_1$ and estimate the
predicted initial value, i.e., $\EE\sbr{f(o_1,\pi_f(o_1))}$, for each
$f \in \Fcal$.  Then, we initialize the confidence set
$\Bcal^0 \gets \Fcal$ and empty datasets $\{\Dcal_h\}_{h=1}^H$, one
for each time step. Then for each epoch $k \in [K]$ we follow three steps:
\begin{enumerate}
\item \emph{Optimistic planning}. Compute the function $f \in \Bcal^{k-1}$ with largest predicted initial value.
\item \emph{Data collection.} Collect one trajectory by following $\pi_{f_k}\circ_{m(h)}\unif(\Acal)$ for each $h \in [H]$. That is we collect $h$ trajectories total, rolling in with the greedy policy $\pi_{f_k}$ until time $h-m$ and rolling out randomly.
\item \emph{Refine the confidence set.} Update the confidence set to $\Bcal^{k}$ using the newly collected trajectories. The confidence set is designed so that $Q^\star \in \Bcal^k$ for all
$k \in [K]$ and that all functions in $\Bcal^k$ have low squared
Bellman error on the data collected in the previous
episodes.
\end{enumerate}

After iterating through these steps for several epochs,
$m$-\golf~outputs uniform mixture over all previous policies $\{\pi^k\}_{k=1}^{K}$.



The main difference between \golf~and $m$-\golf~is in the data
collection procedure. Instead of collecting $H$ trajectories per
epoch, \golf~collets a single trajectory where all actions are taken
by the greedy policy $\pi_{f_k}$. On the other hand, in $m$-\golf, we
interrupt the greedy policy and execute random actions so that the
tuple $z_h$ that is added to $\Dcal^h$ is collected from
$\pi_{f_k}\circ_{m(h)}\unif$. At face value, this modification is
relatively benign, but we will see how interrupting the greedy policy
is critical to establishing sample complexity guarantees in the
$m$-step decodable POMDP.

We analyze $m$-\golf~in two settings. The first is where the
underlying/latent MDP is tabular, meaning that $S$ and $A$ are
small. The second setting is where the latent MDP has a linear or low
rank structure. Our first theorem provides a sample complexity
guarantee for $m$-\golf~when the latent dynamics are tabular.


\begin{theorem} 
\label{thm:golf_regret}
Under Assumptions~\ref{asm:nstep_decode}, \ref{asm:realizability}, and \ref{asm:generalized_completeness}, there exists an absolute constant $c$ such that for any $\delta \in (0,1]$ and $\epsilon > 0$, if we choose
\begin{equation*}
\begin{aligned}
	&K_{\rm est} = c \cdot \Big( \log[\Ncal_{\Fcal}(\epsilon)/\delta]/\epsilon^2 \Big)\\
	&\beta = c \cdot \Big(\log\big[\Ncal_{\Gcal}(\rho) K H / \delta\big]+K\rho\Big)\\
	&{\rho =  \epsilon^2 \cdot \big[H^2A^m S\log[S/\epsilon]\big]^{-1}}
\end{aligned}
\end{equation*}
 in $m$-\golf\ (\cref{alg:golf}), then the output policy $\pi^{\rm out}$ is $\Ocal(\epsilon)$-optimal with probability at least $1-\delta$ if
\begin{equation*}
	{K \geq \tilde{\Omega}\left(\frac{H^2A^mS}{\epsilon^2} \cdot \log\left[\frac{\Ncal_{\Gcal}(\rho)}{\delta}\right] \right).}
\end{equation*}    
\end{theorem}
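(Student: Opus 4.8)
The plan is to adapt the \golf\ analysis of \citet{jin2021bellman}, which rests on three pillars: (i) \emph{optimism}, ensuring $Q^\star\in\Bcal^k$ at every epoch; (ii) a \emph{performance-difference identity} reducing per-epoch suboptimality to on-policy Bellman residuals; and (iii) an \emph{elliptical-potential / distributional-eluder} argument bounding the accumulated residuals. The only genuinely new ingredient — and the reason the data-collection procedure interrupts $\pi_{f^k}$ with random actions — is a low-rank, $S$-dimensional decomposition of the on-policy Bellman residual obtained from a \emph{moment-matching policy}; I expect this (Step 3 below) to be the crux, and it is what keeps the rate at $A^m$ rather than $A^{2m}$.

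\textbf{Steps 1 and 2 (standard GOLF machinery).} First I would show that with probability $1-\delta$: (a) $|\hat f_1-\EE[f(o_1,\pi_f(o_1))]|\le\epsilon$ for all $f\in\Fcal$, by Hoeffding plus a union bound over an $\epsilon$-cover of $\Fcal$ — this fixes $K_{\rm est}$; and (b) for all $k$, $Q^\star\in\Bcal^k$ and every $f\in\Bcal^k$ satisfies $\sum_{j<k}\EE_{(z,a)\sim\mu^j_h}[\Ecal_h(f)(z,a)^2]\le O(\beta)$ for all $h$, where $\Ecal_h(f):=f_h-\Tcal_h f_{h+1}$ and $\mu^j_h$ is the law of $(z_h,a_h)$ under the epoch-$j$ rollout $\pi_{f^j}\circ_{m(h)}\unif$. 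Part (b) is the usual least-squares argument: $\Tcal_h f_{h+1}$ is the population square-loss minimizer and lies in $\Gcal_h$ by \cref{asm:generalized_completeness}; the loss gap at $\Tcal_h f_{h+1}$ equals $\sum_{j<k}\EE_{\mu^j_h}[\Ecal_h(f)^2]$ in expectation; and $\beta$ is chosen, using the $\Ncal_{\Gcal}(\rho)$-cover and the $K\rho$ discretization slack, to dominate the Freedman fluctuations uniformly over $k,h$. Realizability (\cref{asm:realizability}) and $Q^\star_h=\Tcal_h Q^\star_{h+1}$ give $Q^\star\in\Bcal^k$, whence by \cref{line:alg_golf_optimistic} and (a), $\hat f^k_1\ge V^\star-\epsilon$. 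Next, telescoping $\Ecal_h(f^k)$ along the greedy trajectory of $\pi^k=\pi_{f^k}$ gives $\hat f^k_1-V^{\pi^k}=\sum_{h}\EE_{\pi^k}[\Ecal_h(f^k)(z_h,\pi^k(z_h))]$ up to an additive $O(\epsilon)$, so $V^\star-V^{\pi^k}\le O(\epsilon)+\sum_h\EE_{\pi^k}[\Ecal_h(f^k)(z_h,\pi^k(z_h))]$; since $\pi^{\rm out}$ is the uniform mixture it suffices to bound $\tfrac1K\sum_k\sum_h|\EE_{\pi^k}[\Ecal_h(f^k)(z_h,\pi^k(z_h))]|$ by $O(\epsilon)$.

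\textbf{Step 3 (the crux: $S$-dimensional decomposition via moment matching).} Fix $h,k$ and write $w:=m(h)$; let $g:=\Ecal_h(f^k)(z_h,\pi^k(z_h))$, a deterministic function of $z_h$, hence of the length-$m$ window $(o,a)_{w:h}$. Note $s_w$ has the same marginal $d^k(\cdot):=\Pr_{\pi^k}(s_w=\cdot)$ under both $\pi^k$ and $\mu^k_h$, since the rollout plays $\pi^k$ up through the transition into $s_w$. The obstacle: under $\pi^k$ the window actions $a_w,\dots,a_{h-1}$ are chosen by the $m$-step policy $\pi^k$ at suffixes $z_t$ that reach \emph{strictly before} $w$, so the window trajectory is not a function of $(s_w,\text{window-so-far})$; decomposing naively on $(s_w,a_{w:h-1})$ then has dimension $SA^{m-1}$, and because the data plays $\unif$ inside the window the elliptical step costs an extra $A^{m-1}$, giving only an $A^{2m-1}$ rate. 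The fix is the moment-matching policy $\tilde\pi$: from $s_w\sim d^k$, at each $t\in\{w,\dots,h-1\}$ play $a_t\sim\Pr_{\pi^k}(a_t\in\cdot\mid s_w,(o,a)_{w:t-1},o_t)$. A step-by-step induction shows the joint law of the window under $\tilde\pi$ (rolled from $d^k$) equals its law under $\pi^k$, so $\EE_{\pi^k}[g]=\EE_{\tilde\pi}[g]$. Changing measure over the $m-1$ window actions from $\tilde\pi$ to $\unif$ and conditioning on $s_w=s$,
\[
\EE_{\pi^k}[g]=\sum_{s\in\Scal}d^k(s)\,v^k(s),\qquad v^k(s):=\EE_{\unif}\big[W\,g\mid s_w=s\big],
\]
where $W:=A^{m-1}\prod_{t=w}^{h-1}\Pr_{\tilde\pi}(a_t\mid\cdot)\in[0,A^{m-1}]$ is the action-likelihood ratio and the expectation is over the uniform-action window rollout from $s$. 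This is a bilinear form $\langle d^k,v^k\rangle$ with $d^k,v^k\in\RR^{\Scal}$ of dimension $S$. Since $\EE_{\unif}[W\mid s_w=s]=1$ we have $\EE_{\unif}[W^2\mid s_w=s]\le A^{m-1}$, so Cauchy--Schwarz gives $v^k(s)^2\le A^{m-1}\EE_{\unif}[g^2\mid s_w=s]$; averaging over $s\sim d^j$ and paying one further $A$ to pass from action $\pi^k(z_h)$ to the uniform $a_h$ in $\mu^j_h$ yields $\sum_s d^j(s)\,v^k(s)^2\le A^m\,\EE_{(z,a)\sim\mu^j_h}[\Ecal_h(f^k)(z,a)^2]$.

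\textbf{Step 4 (elliptical potential and conclusion).} With $\Sigma^h_k:=\diag(d^k)$, Step 3 and Step 1(b) give $\sum_{j<k}\|v^k\|^2_{\Sigma^h_j}\le A^m\sum_{j<k}\EE_{\mu^j_h}[\Ecal_h(f^k)^2]\lesssim A^m\beta$. Because $d^k$ is a probability vector, $\diag(d^k)\succeq d^k(d^k)^\top$, so the ridge-regularized covariance sequence $\bar\Sigma^h_k:=\lambda I+\sum_{j\le k}\Sigma^h_j$ dominates the rank-one sequence of the $d^k$'s, and a standard potential/eluder bound (dimension $S$, $\|d^k\|_2\le1$) gives $\sum_k\|d^k\|^2_{(\bar\Sigma^h_{k-1})^{-1}}\lesssim S\log K$ together with $\|v^k\|^2_{\bar\Sigma^h_{k-1}}\lesssim A^m\beta$ (the ridge only affecting logarithmic factors). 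Cauchy--Schwarz over $k$ then gives, for each $h$, $\sum_k|\langle d^k,v^k\rangle|\le\sqrt{\sum_k\|d^k\|^2_{(\bar\Sigma^h_{k-1})^{-1}}}\sqrt{\sum_k\|v^k\|^2_{\bar\Sigma^h_{k-1}}}\lesssim\sqrt{SA^m\beta K\log K}$; summing over $h$ and combining with Steps 1 and 2, $\sum_k(V^\star-V^{\pi^k})\lesssim\epsilon K+H\sqrt{SA^m\beta K\log K}$, i.e.\ $V^\star-V^{\pi^{\rm out}}\lesssim\epsilon+H\sqrt{SA^m\beta\log K/K}$. This is $O(\epsilon)$ once $K\gtrsim H^2 S A^m\beta\log K/\epsilon^2$, and substituting $\beta=\tilde\Theta(\log[\Ncal_{\Gcal}(\rho)/\delta])$ — for which the $K\rho$ term is lower-order under the stated $\rho$ — recovers $K\ge\tilde\Omega(H^2 A^m S\log[\Ncal_{\Gcal}(\rho)/\delta]/\epsilon^2)$. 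The main obstacle is entirely in Step 3: exhibiting $\tilde\pi$ and verifying it preserves the window law exactly while confining the policy's dependence to $(s_w,\text{window})$, which is precisely what turns the $A^{m-1}$ ``history reach'' into a benign importance weight rather than a dimension blow-up.
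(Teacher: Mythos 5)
Your proposal is correct and follows essentially the same route as the paper: the same GOLF concentration/optimism/performance-difference skeleton, the same moment-matching policy (your conditioning on $(s_{m(h)},\text{window observables})$ coincides with the paper's conditioning on all window states by conditional independence of the pre-window history and the intermediate states given $s_{m(h)}$ and the observables), the same $\langle d^k, v^k\rangle$ bilinear decomposition of rank $S$, and the same $A^m$ change of measure to the uniform-action window. The only cosmetic difference is that you run an explicit ridge-regularized elliptical potential where the paper invokes the eluder-dimension bound for low-rank classes (Proposition~6 of Russo--Van Roy plus Lemma~41 of Jin et al.), which amounts to the same argument.
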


\Cref{thm:golf_regret} establishes a sample complexity bound for 
$m$-\golf~scaling as
$\textrm{poly}(S,A^m,H,\textrm{comp}(\Fcal,\Gcal),1/\epsilon)$ where
$\textrm{comp}(\cdot)$ is our measure of statistical complexity.
Unlike the megastate reduction, there is no explicit dependence on the
size of the observation space $O$; instead the bound scales with the
complexity of the function class, which allows us to exploit domain
knowledge and inductive biases when deploying the algorithm. 
In addition, the bound exhibits a linear dependence on $S$, the
cardinality of the latent state space. This dependence matches GOLF~\cite{jin2021bellman} and improves over previous upper
bounds for the Block MDP
case~\citep{jiang2017contextual,du2021bilinear}, which we recall is a
special case with $m=1$. 
We emphasize that these previous analyses do
not seem to yield guarantees when $m>2$, as we will see
in \Cref{sec:proof_overview}.


\subsection{Linear $m$-step Decodable POMDP}
\label{sec:linear}

In this subsection, we show that $m$-\golf~ extends to the setting where the number of state $S$ is also large.
Specifically, we consider the case where the latent MDP is a linear MDP~\cite{jin2020provably}---there exists an unknown feature map $\psi: \Scal\times\Acal\rightarrow \mathbb{R}^{d_{lin}}$ such that the transition dynamics are linear in $\psi$. Interestingly, we show that $m$-\golf\ is still applicable without change. It retains a similar sample complexity guarantee where we replace the dependence on $S$ with a dependence on the latent dimensionality $\dlin$.

Formally, a linear MDP is defined as follows:
\begin{definition}[Linear MDP]
\label{asm:linear}
An MDP $\Mcal = (\Scal, \Acal, H, \PP, r)$ is said to be a linear with a feature map $\psi :\Scal \times \Acal \rightarrow \mathbb{R}^{\dlin}$, if for any $h \in [H]$:
	 There exists $\dlin$ unknown (signed) measures $\bmu_h=\{\mu_h^{(1)},\dots,\mu_h^{(\dlin)}\}$ over $\Scal$ such that for any $(s,a) \in \Scal \times \Acal$ we have 
	 $$\PP_h(\cdot\mid s,a)=\langle\bmu_h(\cdot),\psi(s,a)\rangle$$
We assume the standard normalization:
$\norm{\psi(s,a)} \leq 1$ for all $(s,a) \in \Scal \times \Acal$,
$\norm{\int v(s) \bmu_h(s)}_2 \leq \sqrt{\dlin}$ for all
$h \in [H]$ and $v$ with $\|v\|_{\infty} \leq 1$.
\end{definition}



The following result gives a sample complexity guarantee for $m$-\golf~in the more general linear $m$-step decodable POMDP model.
\begin{theorem} 
\label{thm:golf_linear_regret}
Under Assumptions~\ref{asm:nstep_decode}, \ref{asm:realizability}, and \ref{asm:generalized_completeness} and assuming linear latent MDP; there exists an absolute constant $c$ such that for any $\delta \in (0,1]$ and $\epsilon > 0$, if we choose
\begin{equation*}
\begin{aligned}
	&K_{\rm est} = c \cdot \Big( \log[\Ncal_{\Fcal}(\epsilon)/\delta]/\epsilon^2 \Big)\\
	&\beta = c \cdot \Big(\log\big[\Ncal_{\Gcal}(\rho) K H / \delta\big]+K\rho\Big)\\
	&{\rho = \epsilon^2 \cdot \big[H^2A^m\dlin \log[\dlin/\epsilon]\big]^{-1}}
\end{aligned}
\end{equation*}
 in $m$-\golf\ (\cref{alg:golf}), then the output policy $\pi^{\rm out}$ is $\Ocal(\epsilon)$-optimal with probability at least $1-\delta$ if
\begin{equation*}
	{K \geq \tilde{\Omega}\left(\frac{H^2 A^m \dlin}{\epsilon^2} \cdot \log\left[\frac{\Ncal_{\Gcal}(\rho)}{\delta}\right] \right).}
\end{equation*}    
\end{theorem}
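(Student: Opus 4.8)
The plan is to reuse the proof of \Cref{thm:golf_regret} essentially verbatim and isolate the single place where the latent cardinality $S$ enters, replacing it by the linear dimension $\dlin$. The proof of \Cref{thm:golf_regret} has the familiar \golf\ skeleton: (i) a concentration argument showing that, with probability $\ge 1-\delta$, $Q^\star\in\Bcal^k$ for every epoch $k$ and every $f\in\Bcal^k$ has small empirical squared Bellman error on the aggregated datasets $\{\Dcal_h\}$ (this uses only \Cref{asm:realizability}, \Cref{asm:generalized_completeness}, the choice of $\beta$, and a union bound over the $\rho$-cover of $\Gcal$, with the $K\rho$ term absorbing discretization error); (ii) an optimism step turning $V^\star-V^{\pi^k}$ into a sum over $h$ of the \emph{on-policy} Bellman residuals of $f^k$; and (iii) a low-rank/eluder pigeonhole that bounds $\sum_k\sum_h \EE_{\pi^k}[\mathcal{E}_h(f^k)]$ by the ``width'' of the problem, where $\mathcal{E}_h(f):=f_h-\Tcal_h f_{h+1}$. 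Steps (i) and (ii) are identical here, and the settings of $K_{\rm est},\beta,\rho$ are the same up to the substitution $S\to\dlin$; all the new work is in step (iii).

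For step (iii) I would use the moment-matching decomposition of the average Bellman error. Fix $h$ and look at the window of steps $[m(h),h{+}1]$. Since $\Dcal_h$ is collected by rolling in with $\pi^k$ up to step $m(h)-1$ and then playing uniform actions through the window, and since $\mathcal{E}_h(f)(z_h,a_h)$ together with $\max_{a'}f_{h+1}(z_{h+1},a')$ are measurable functions of the window variables $(s,o,a)_{m(h):h+1}$ (crucially, $z_{h+1}$ lives inside this window), one can integrate out the whole window conditionally on the entering state $s_{m(h)}$. This yields, for any $f$,
\[
\EE_{\pi^k\circ_{m(h)}\unif}\big[\mathcal{E}_h(f)(z_h,a_h)\big]=\big\langle d^{\pi^k}_{m(h)},\, w_h(f)\big\rangle,
\]
where $d^{\pi^k}_{m(h)}$ is the distribution of $s_{m(h)}$ under $\pi^k$ and $w_h(f)(\cdot)$ is the conditional window expectation of the Bellman residual; moreover the \emph{on-policy} residual appearing in the regret decomposition has the same form with the same left factor $d^{\pi^k}_{m(h)}$ and a right factor $\widetilde w_h(f)$ with $|\widetilde w_h(f)(s)|\le A^m\,|w_h(f)(s)|$ (importance weighting over the $\le m$ window steps to convert uniform actions to $\pi_{f^k}$ actions). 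This is exactly the $S A^m$ width used in the tabular-latent proof.

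Now invoke the linear latent MDP. For $h$ with $m(h)\ge 2$, linearity gives $d^{\pi^k}_{m(h)}(s)=\langle \mu_{m(h)-1}(s),\,\EE_{\pi^k}[\psi(s_{m(h)-1},a_{m(h)-1})]\rangle$, hence
\[
\EE_{\pi^k\circ_{m(h)}\unif}\big[\mathcal{E}_h(f)(z_h,a_h)\big]=\Big\langle \EE_{\pi^k}[\psi(s_{m(h)-1},a_{m(h)-1})],\ \textstyle\int w_h(f)(s)\,\mu_{m(h)-1}(ds)\Big\rangle,
\]
a bilinear form in $\RR^{\dlin}$; the normalization $\|\psi\|\le 1$ and $\|\int v\,\mu_h\|_2\le\sqrt{\dlin}$ applied with $v=w_h(f)$ (uniformly bounded since rewards sum to at most $1$, so Bellman residuals lie in $[-1,1]$) provide the magnitude bounds needed for the eluder pigeonhole. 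For the finitely many steps with $m(h)=1$, $d^{\pi^k}_1$ is the fixed initial distribution, so the form is rank one and contributes nothing to the dimension count. Substituting the width ``$\dlin A^m$'' for ``$S A^m$'' in the \golf\ regret bound of \citet{jin2021bellman} and optimizing over $K$ then yields the claimed sample complexity, with $\rho=\epsilon^2[H^2A^m\dlin\log(\dlin/\epsilon)]^{-1}$ and $\beta$ as stated.

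The main obstacle is the measurability/window bookkeeping in the moment-matching step: verifying that $z_{h+1}$ (hence $\Tcal_h f_{h+1}$) truly lies inside the length-$m$ window so that conditioning on $s_{m(h)}$ suffices, carefully tracking the $A^m$ importance-weight blow-up and the $\|\cdot\|_\infty$ bound on $w_h(f)$ so that the linear-MDP normalization applies cleanly, and checking that the off-by-one in $m(h)-1$ and the $h\le m$ boundary are handled correctly (including re-confirming that the $K\rho$ discretization term still balances when $\rho$ carries a $\dlin$ rather than an $S$). None of this is deep, but the window indexing is exactly where an error is most likely to creep in.
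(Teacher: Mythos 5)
Your proposal is correct and follows essentially the same route as the paper: the paper's proof of \Cref{thm:golf_linear_regret} is exactly a one-lemma substitution in the proof of \Cref{thm:golf_regret}, replacing the rank-$S$ factorization of $\Ecal^\star_h(\pi,f)$ through the distribution of $s_{m(h)}$ by the rank-$\dlin$ factorization $\zeta_h(\pi)=\EE_\pi[\psi(s_{m(h)-1},a_{m(h)-1})]$ and $\xi_h(f)=\int\bmu(s_{m(h)})\,\EE_{\mu^{\pi_f,h}}[(f_h-\Tcal_h f_{h+1})(z_h,\pi_f(z_h))\mid s_{m(h)}]$, with the stated normalizations giving $\|\zeta_h\|\le 1$, $\|\xi_h\|\le 2\sqrt{\dlin}$ (\Cref{lem:golf_property_linear}). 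The only cosmetic difference is where you place the $A^m$ change-of-measure factor: the paper keeps it out of the bilinear width and instead applies it to the squared Bellman errors in the eluder constraint (Jensen first, then importance weighting, yielding $\alpha=A^m\beta$), which avoids a potential $A^{2m}$ blow-up from importance-weighting the first moment directly.
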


\Cref{thm:golf_linear_regret} is almost the same as
\Cref{thm:golf_regret} with the dependency on the number of
latent state $S$ replaced by the ambient dimensionality $\dlin$. As a
result, \Cref{thm:golf_linear_regret} can apply to the case where the
number of state $S$ is extremely large or even infinite, as long as
the underlying MDP has a linear structure.

\section{Challenges and Proof Overview}
\label{sec:proof_overview}

In this section we elaborate on the main challenges in analysis, explain our main technique and provide a proof overview for~\pref{thm:golf_regret}.  For clarity, we will focus on the special case of $2$-step decodable POMDP in this setting. We refer reader to Appendix \ref{app:golf} for cases where $m > 2$.



\subsection{Challenges: Bellman Rank is Prohibitively Large}\label{sec: challenges}
We first note that existing postive results for RL algorithms with general function approximation such as \olive~\cite{jiang2017contextual}, \golf~\cite{jin2021bellman} all rely on the structural properties that certain complexity measure on the Bellman error is small. One such complexity is the Bellman rank~\cite{jiang2017contextual}, which explains the tractability of block MDP (the special case of $m$-step decodable POMDP with $m=1$).

Consider the Bellman error at the $h^{th}$ time step of a function $f\in \Fcal$ when executing roll-in policy $\pi$, given by
\begin{equation*}
	\Ecal_h(\pi,f) = \EE[(f_h-\Tcal_h f_{h+1})(z_h,\pi_f(z_h)) \mid a_{1:h-1} \sim \pi ].
\end{equation*}
Bellman rank is defined as the smallest integer $M$ such that the Bellman error can be factorized as inner product in $M$ dimensional linear space. That is, there exists $\zeta,\xi\in \mathbb{R}^M$ such that $\Ecal_h(\pi,f) = \inner{\zeta(\pi)}{\xi(f)}$. 

Intuitively, Bellman rank describes how much information is shared among past (roll-in policy $\pi$) and future (value function $f$) at step $h$. In the special case of $1$-step decodable POMDP, it suffices to consider $1$-step policy where the choice of action $a_h$ only depends on the current observation $o_h$. In this case, given the state $s_h$ at the current step $h$, the past---$(s, o, a)_{1:h-1}$ (which only depends on roll-in policy $\pi$) is completely independent of the future---$(o_h, a_h, (s, o, a)_{h+1:H})$ (which only depends on function $f$). Therefore, it can be shown the Bellman rank of $1$-step decodable POMDP (i.e. block MDP) is upper bounded by the number of states $S$ \cite{jiang2017contextual}.

However, such independent structure completely collapses in $2$-step decodable POMDP, where we must consider $2$-step policy. Due to the nature of such policies, the choice of action $a_h$ not only depends on the current observation $o_h$, but also the observation and action in the previous step $o_{h-1}, a_{h-1}$ (as shown in Figure \ref{fig: moment matching policy} blue box). Therefore, conditioning $s_h$, the past is no longer independent of the future. This can potentially lead to very large Bellman rank.

Formally, our next result shows that the Bellman rank in $2$-step decodable POMDP can be prohibitively large---there exists examples where the Bellman rank can be lower bounded by the cardinality of the observation space $\Omega(O)$. This is highly undesirable in the rich observation setting where $O$ can be even infinite. Furthermore, we also show that \olive\ algorithm---which was proposed in \cite{jiang2017contextual} to solve all RL problems with small Bellman rank---needs at least $\Omega(O)$ samples to find an $O(1)$ optimal policy. 

\begin{proposition}[Bellman rank of $m$-step decodable POMDP is large]\label{prop: bellman rank of an m step dec pomdp is large}
There exists a $2$-step decodable POMDP $\Mcal$ and a function class $\Fcal$ such that the Bellman rank of $\rbr{\Mcal,\Fcal}$ is $\Omega(O)$. Additionally, \olive\ instantiated with $\Fcal$ requires $\Omega(O)$ samples to find an $o(1)$ optimal policy.
\end{proposition}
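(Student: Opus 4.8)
\emph{Proof plan.} My plan is to build a $2$-step decodable POMDP whose latent dynamics have constant size but whose first observation $o_1$ is uniform over a set of size $O$, together with a function class containing $\Theta(O)$ ``decoy'' $Q$-functions whose Bellman residual at step $2$ is concentrated at a single value of the \emph{previous} observation $o_1$. Concretely: take $H=3$, $\Acal=\{0,1\}$, latent states $\mathsf{s}_1$ (step $1$), $\{\mathsf{s}_2^{+},\mathsf{s}_2^{-}\}$ (step $2$), $\mathsf{s}_3$ (step $3$); from $\mathsf{s}_1$ action $a_1=0$ goes to $\mathsf{s}_2^{+}$ and $a_1=1$ to $\mathsf{s}_2^{-}$, and both step-$2$ states go to $\mathsf{s}_3$ irrespective of $a_2$; $o_1\sim\unif([O])$, step~$3$ has a single uninformative observation, and at step $2$ the state $\mathsf{s}_2^{+}$ emits $w_1$ w.p.\ $\tfrac34$ and $w_0$ w.p.\ $\tfrac14$ while $\mathsf{s}_2^{-}$ swaps these, with $r_2(w_1)=1$, $r_2(w_0)=0$ and all other rewards zero. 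This is $2$-step decodable (the step-$2$ state is determined by $a_1$, so $\phi^\star_2(o_1,a_1,o_2)=\mathsf{s}_2^{+}$ iff $a_1=0$) but genuinely not $1$-step decodable, since the noisy emission $o_2$ alone does not identify the step-$2$ state; and $V^\star=\tfrac34$, $Q^\star_2\equiv0$, $Q^\star_1(o_1,a_1)=\tfrac34\mathbb{1}[a_1=0]+\tfrac14\mathbb{1}[a_1=1]$. For each $j\in[O]$ set $f^{(j)}_2$ equal to $0$ except $f^{(j)}_2((j,1,o_2),0)=\tfrac34$ for both $o_2$, and $f^{(j)}_1:=\Tcal_1 f^{(j)}_2$; let $\Fcal_h=\Gcal_h$ consist of $Q^\star_h$ together with the $f^{(j)}_h$ (with $\Fcal_3=\{0\}$), so that realizability and generalized completeness hold by construction. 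A direct computation then shows $\pi_{f^{(j)}}$ plays $a_1=\mathbb{1}[o_1=j]$, has predicted initial value $\hat f^{(j)}_1=\tfrac34+\tfrac1{4O}>V^\star$, and achieves value $V^{\pi_{f^{(j)}}}=\tfrac34-\tfrac1{2O}$: a strictly optimistic function whose greedy policy is $\tfrac1{2O}$-suboptimal.

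For the Bellman-rank claim, the key identity is that rolling in with $\pi_{f^{(i)}}$ forces $a_1=\mathbb{1}[o_1=i]$, so the residual $f^{(j)}_2$---which is supported only on the suffix $(j,1,\cdot)$---is visited if and only if $i=j$, giving $\Ecal_2(\pi_{f^{(i)}},f^{(j)})=\tfrac3{4O}\mathbb{1}[i=j]$. Hence the $O\times O$ submatrix of the Bellman-error matrix indexed by the coherent decoys equals $\tfrac3{4O}\Ib$, which has rank $O$, so the Bellman rank of $(\Mcal,\Fcal)$ is $\Omega(O)$. Note that this uses exactly the $2$-step phenomenon flagged before the statement: a step-$h$ value function may legitimately depend on $(o_{h-1},a_{h-1})$, which is what the conditional-independence argument for block MDPs rules out.

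For the \olive\ lower bound I would track a run of \olive\ with any parameters under which it returns an $\epsilon$-optimal policy with probability $\ge\tfrac23$, for target accuracy $\epsilon=\Theta(O^{-3/2})=o(1)$ and elimination threshold $\epsilon'<\epsilon$. \olive\ always selects the surviving function of largest predicted value, and since every function whose first component is some $f^{(j)}_1$ has predicted value $\tfrac34+\tfrac1{4O}$, strictly above that of $Q^\star$, it keeps selecting such functions until none remain. I would then argue that each iteration eliminates at most one coherent decoy: picking a coherent $f^{(j)}$ exposes $\Ecal_2(\pi_{f^{(j)}},f^{(j)})=\tfrac3{4O}>\epsilon$, and the level-$2$ elimination removes exactly the functions whose second component is $f^{(j)}_2$ (all others have level-$2$ error $0$ under that roll-in) but neither $Q^\star$ nor another coherent decoy; picking a ``mismatched'' function (first component $f^{(i)}_1$, second component $\neq f^{(i)}_2$) exposes a \emph{roll-in-independent} level-$1$ error of size $\tfrac3{4O}>\epsilon$, and the level-$1$ elimination clears all mismatched functions in one shot but, again, no coherent decoy (whose level-$1$ error is exactly $0$). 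Consequently \olive\ cannot halt while an optimistic coherent decoy survives, is driven through all $\Omega(O)$ of them, and only then can select $Q^\star$ and output the optimal policy; since each iteration draws at least one sample, its sample complexity is $\Omega(O)$ (in fact $\tilde{\Omega}(O^3)$, since returning an $\epsilon$-optimal policy forces each iteration to resolve $\Theta(1/O)$-size Bellman errors).

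The step I expect to be the genuine obstacle is controlling the product structure $\Fcal=\Fcal_1\times\cdots\times\Fcal_H$ inside the \olive\ argument: one must rule out that some cross-component function $(f^{(i)}_1,f^{(l)}_2,0)$ with $l\neq i$---which is just as optimistic as a true decoy---carries negligible Bellman error together with an accidentally near-optimal greedy policy, which would let \olive\ terminate in $O(1)$ rounds. The construction is designed so that the decoy's ``over-claim'' $f^{(j)}_2((j,1,\cdot),0)=\tfrac34$ is reflected, via $f^{(j)}_1=\Tcal_1 f^{(j)}_2$, in a change to $f^{(j)}_1$ at the single input $(j,1)$; any mismatch then inherits the discrepancy $f^{(i)}_1(i,1)-(\Tcal_1 f^{(l)}_2)(i,1)=\tfrac34$ as a level-$1$ residual that is visible regardless of the roll-in, so a single level-$1$ elimination disposes of all mismatches without touching the coherent decoys. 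Carefully verifying this, together with checking that \olive's estimation noise cannot let it skip an elimination within the claimed sample budget, are the remaining routine steps.
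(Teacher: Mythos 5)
Your construction establishes the first half of the proposition correctly, and by a different route than the paper: you make the Bellman-error matrix diagonal by giving each decoy $f^{(j)}_2$ support on a suffix $(j,1,\cdot)$ that is reached under roll-in $\pi_{f^{(i)}}$ only when $i=j$, whereas the paper uses a Hadamard set system $S_1,\dots,S_{O-1}$ with $|S_i\cap S_j|=|S_i\cap \bar S_j|=O/4$ so that the off-diagonal errors cancel \emph{in expectation} rather than being identically zero. Your version is simpler, and the computations you report ($\hat f^{(j)}_1 = \tfrac34+\tfrac1{4O}$, $V^{\pi_{f^{(j)}}}=\tfrac34-\tfrac1{2O}$, $\Ecal_2(\pi_{f^{(i)}},f^{(j)})=\tfrac3{4O}\one[i=j]$) all check out, so the rank-$\Omega(O)$ claim goes through.

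The genuine gap is in the second half, and the difference between the two constructions is exactly what causes it. Because each of your decoys deviates from $Q^\star$ only on the $1/O$-probability event $\{o_1=j\}$, both its on-diagonal Bellman error ($\tfrac{3}{4O}$) and the suboptimality of its greedy policy ($\tfrac{1}{2O}$) are $\Theta(1/O)$. Consequently, if \olive\ is run at any \emph{constant} target accuracy (or any accuracy $\gg 1/O$), its elimination threshold exceeds $\tfrac{3}{4O}$, it halts at the very first iteration, and the policy it outputs---whether $\pi^\star$ or some $\pi_{f^{(j)}}$---is already $O(1/O)=o(1)$-optimal. So on your instance \olive\ \emph{succeeds} with $O(1)$ iterations at constant accuracy, and no $\Omega(O)$ lower bound of the intended kind holds. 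Your fix of shrinking the target to $\epsilon=\Theta(O^{-3/2})$ does force $\Omega(O)$ elimination rounds, but it simultaneously forces $\Omega(1/\epsilon^2)$ samples per round just for estimation, so the resulting bound is an artifact of demanding sub-$1/O$ accuracy rather than a demonstration that large Bellman rank defeats \olive. The paper's Hadamard cancellation is what lets the diagonal entries be a constant ($1/4$) and the decoy policies be a constant ($1/8$) suboptimal while the off-diagonal entries are still exactly zero: then \olive\ at constant accuracy \emph{cannot} halt while any decoy survives, each activated roll-in eliminates only the one decoy on the diagonal, and the $\Omega(O)$ sample bound holds in the meaningful, constant-accuracy sense. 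To repair your proof you would need to ``spread'' each decoy's deviation over a constant fraction of observations while keeping the off-diagonal averages zero---which is precisely the Hadamard construction.
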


This highlights the challenge on directly applying existing results or techniques to solve $m$-step decodable POMDPs. Although \olive\ solves a $1$-step decodable POMDP---namely, a block MDP---it fails in solving an $m$-step decodable POMDP for $m \ge 2$.



\subsection{Proof Overview \& Moment Matching Policy}
\label{sec:moment_matching}

\begin{figure}
\centering
\includegraphics[width=0.45\textwidth]{./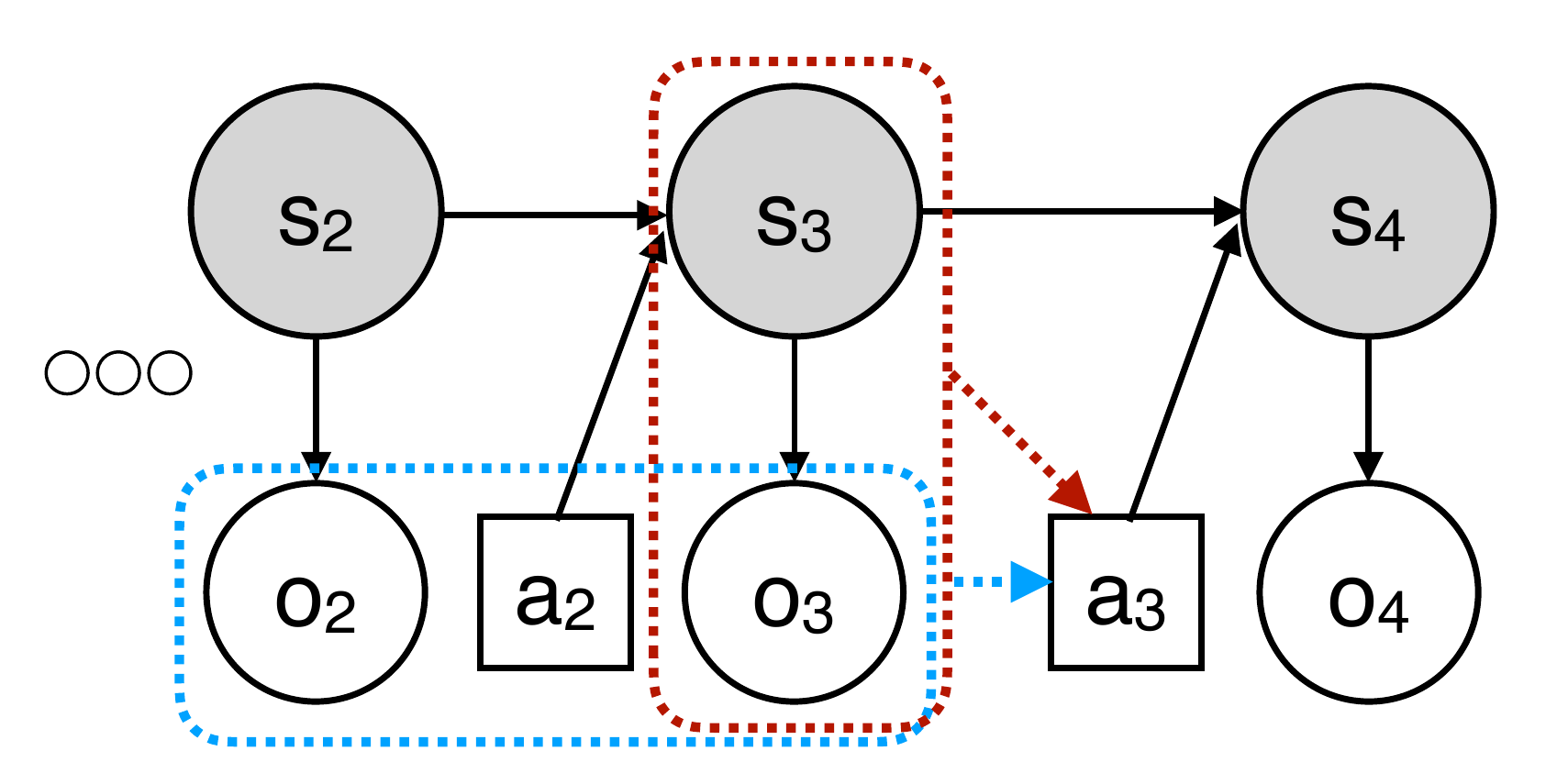}
\caption{An illustration of the dependency structure of a moment matching policy, depicted in red, and a regular policy, depicted in blue, in a $2$-step decodable POMDP. The moment matching policy $\mu^{\pi,h+1}$ selects action $a_h$ based on the state $s_h$ and observation $o_h$ to match the distribution $\PP^\pi[a_h \mid s_h,o_h]$. It breaks the dependence on the history by marginalizing out $(o_{h-1},a_{h-1})$, but correctly matches the distribution $\PP^\pi[o_{h+1},a_h,o_h]$. }
\label{fig: moment matching policy}
\end{figure}

Our main proof idea revolves around breaking the complicated dependencies introduced by multiple-step policies, which requires a number of crucial observations.

Our first key observation is that, in order to establish the sample complexity for \golf~algorithm, we don't necessarily need to prove the low rank structure of the Bellman error. We only need to alternatively identify an auxiliary function $\Ecal^\star_h(\pi,f)$ which satisfies the following two properties (see formal statement in~\pref{lem:golf_property}):
\begin{enumerate}
\item \emph{Matches with standard bellman error when $\pi = \pi_f$}: $$\Ecal^\star_h(\pi_f,f) =  \Ecal_h(\pi_f,f).$$
\item  \emph{Has a low-rank decomposition}:  $$\Ecal^\star_h(\pi,f)= \inner{\zeta(\pi)}{\xi(f)}.$$ 
for some $\zeta(\cdot),\xi(\cdot)\in \mathbb{R}^M$ with small $M$,
\end{enumerate}

This discovery gives us a lot extra freedom in designing the functional form of the $\Ecal^\star_h$. In particular, for $2$-step decodable POMDP, we define $\Ecal^\star_h$ to be the normal Bellman error but with the policy at step $h-1$ changed from roll-in policy $\pi$ to a new policy $\mu_f$ which depends only on $f$ instead of $\pi$.
\begin{align*}
 \Ecal^\star_h(\pi,f) \equiv 
 \EE[(f_h-\Tcal_h f_{h+1})(z_h,\pi_f(z_h)) \mid a_{1:h-1} \sim \pi \circ_{h-1} \mu_{f} ].
\end{align*}

The second key observation is that we can choose $\mu_f$ in a form which breaks the dependency and allows low-rank dependency. Concretely, instead of choosing $\mu_f$ to be standard $2$-step policy where $a_{h-1}$ will then depend on $(o_{h-2}, a_{h-2}, o_{h-1})$, we choose $\mu_f$ to be the policy that only depends on $(s_{h-1}, o_{h-1})$ (See Figure \ref{fig: moment matching policy} red box). 
The benefit of considering such policy is that now conditioned on $s_{h-1}$ at step $h-1$, the past---$(s, o, a)_{1:h-2}$ (which only depends on roll-in policy $\pi$) is now independent of the future---$(o_{h-1}, a_{h-1}, (s, o, a)_{h:H})$ (which only depends on function $f$). This immediately leads to a low-rank decomposition of $\Ecal^\star_h(\pi,f)$ with rank $M = S$.

Our third key observation is that we can carefully choose the value of $\mu_f$ within the form specified above, so that $\Ecal^\star_h(\pi,f)$ matches the Bellman error $\Ecal_h(\pi,f)$ when roll-in policy is the greedy policy of $f$, i.e. $\pi = \pi_f$. This is done by the idea of ``\emph{moment-matching}'', which is the reason we call policy $\mu_f$ the ``moment matching policy''. Specifically, we choose policy $\mu_f$ such that
$$\mu_f(a_{h-1}|(o, s)_{h-1}) = \EE_{\pi_f} [\pi_{f}(a_{h-1} | z_{h-1}) | (o, s)_{h-1}] $$
which is policy of $\pi_f$ averaging over all trajectories with $(o, s)_{h-1}$ fixed. The most important property of this policy is that the joint distributions over $z_h$ for policy $\pi_f$ and policy $\pi_f \circ_{h-1} \mu_f$ (which switches at time step $h-1$) are the same. In symbol:
$$ P_{\pi_f}(z_h) = P_{\pi_f \circ_{h-1} \mu_f}(z_h)$$
This directly leads to the matching in the Bellman error. This finishes our construction of $\Ecal^\star_h(\pi,f)$ satisfying the two properties mentioned earlier and the main part of proof overview.

Finally, we comment that our construction of $\mu_f$ depends on the latent state $s$ which can not be observed in POMDP.
Nevertheless, $m$-\golf\ bypasses this problem by executing a uniform action for $m$ time steps, instead of executing $\mu_f$; taking the uniform action for the last $m$ time steps allows us to upper bound $\Ecal^\star_h(\pi,f)$ using the importance sampling trick, while only suffering an $A^m$ degradation in the sample complexity. Such factor is necessary according to~\pref{prop: tabular_lower}.

\section{Conclusion}
\label{sec:conclusion}

In this paper, we initiate the study of $m$-step decodable POMDPs as a
model for understanding the role of short-term memory in sequential
decision making. We consider both the tabular and function
approximation setting and obtain results that scale exponential with
the memory window rather than the horizon, which could be much
larger. In the function approximation case, our techniques rely
crucially on the moment matching policy to break dependency on the
history, and we hope this concept may be useful in other settings with
partial observability.

We believe our progress on understanding short-term memory is just
scratching the surface and there are many questions that remain open
even in the $m$-step decodable POMDP model. The most basic question
pertains to the tabular setting, where the upper bound in \cref{corr:
megastate} and the lower bound in \cref{prop: tabular_lower} differ by
an $O^m$ factor. Instantiating $m$-\golf~ in the tabular setting also
incurs an $O^m$ factor. On the other hand, the next result shows that
by using a carefully constructed policy class in an importance
sampling approach, we can avoid the $O^m$ factor in exchange for an
$A^H$ factor, which could be more favorable in some
settings. See \cref{app: H step decodable POMDP} for details and the
proof.

\begin{restatable}{proposition}{HstepDecPOMDPprop}\label{prop: H step dec POMDP}
There exists an algorithm such that for any $m\leq H$ and any $m$-step decodable POMDP, the algorithm returns an $\epsilon$-optimal policy 
with probability
greater than $1-\delta$ given
$poly(A^H,O,S,H,\log(1/\delta))/\epsilon^2$ samples.
\end{restatable}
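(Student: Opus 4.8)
The plan is to recast the task as policy search over a carefully designed \emph{deterministic} policy class $\Pi_{\mathrm{rec}}$ whose log‑cardinality is only $\mathrm{poly}(O,S,A,H)$ (in particular, \emph{linear} in $O$ and independent of $m$), and then to evaluate every member of $\Pi_{\mathrm{rec}}$ simultaneously by importance sampling against episodes collected with uniformly random actions. The first ingredient is a structural fact about the optimum. Consider the \emph{latent MDP} $\bar{\Mcal}=(\Scal,\Acal,H,\PP,\bar r)$ with averaged reward $\bar r_h(s)=\EE_{o\sim\OO_h(\cdot\mid s)}[r_h(o)]$, and let $g^\star=\{g^\star_h:\Scal\to\Acal\}$ be its deterministic Markov optimal policy. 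Because $o_h$ enters the return only through $r_h(o_h)$ and is conditionally independent of the rest of the trajectory given $s_h$, any history‑dependent policy $\pi$ satisfies $V^\pi=\EE_\pi[\sum_h \bar r_h(s_h)]$, and the law of the latent‑state chain it induces equals that of $\bar{\Mcal}$ driven by the marginalized policy $s\mapsto\PP_\pi[a_h=\cdot\mid s_h=s]$; hence $V^\pi\le V^\star_{\bar{\Mcal}}$. Conversely, composing $g^\star$ with the decoder and playing $g^\star_h(\phi^\star_h(z_h))$ realizes exactly $g^\star$ as the marginalized policy, so $\pistar:=g^\star\circ\phi^\star$ attains $V^\star_{\bar{\Mcal}}$ and is optimal for the POMDP.

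The second and crucial step is to show that the unknown decoder $\phi^\star$ has a \emph{compact recursive form}: I claim there are maps $\widetilde\phi_1:\Ocal\to\Scal$ and $\widetilde\phi_h:\Scal\times\Acal\times\Ocal\to\Scal$ for $2\le h\le H$ such that, along every reachable trajectory, the quantities defined by $\hat s_1=\widetilde\phi_1(o_1)$ and $\hat s_h=\widetilde\phi_h(\hat s_{h-1},a_{h-1},o_h)$ obey $\hat s_h=s_h$. I would prove this by a trajectory‑splicing argument: given two reachable trajectories $\tau,\tau'$ with $s^\tau_{h-1}=s^{\tau'}_{h-1}$, $a^\tau_{h-1}=a^{\tau'}_{h-1}$ and $o^\tau_h=o^{\tau'}_h$, grafting the length‑$(h-1)$ prefix of $\tau'$ onto the continuation of $\tau$ (beginning with the action $a^\tau_{h-1}$) produces a new \emph{reachable} trajectory whose length‑$m$ suffix ending at step $h$ is exactly $z^{\tau'}_h$; \cref{asm:nstep_decode} then forces $s^\tau_h=\phi^\star_h(z^{\tau'}_h)=s^{\tau'}_h$, so $\phi^\star_h$ factors through $(s_{h-1},a_{h-1},o_h)$ and $\widetilde\phi_h$ is well defined (its values on unreachable inputs being irrelevant). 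This is the main obstacle, and it is exactly where restricting decodability to \emph{reachable} trajectories is essential, since that is what makes the spliced trajectory legitimate. The payoff is that, unlike a generic $m$-step policy, the recursive decoder has description length $\mathrm{poly}(O,S,A,H)$.

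With this in hand, I would define $\Pi_{\mathrm{rec}}$ to be the set of all deterministic policies obtained by fixing maps $\widetilde\phi_1,\dots,\widetilde\phi_H$ and $g_1,\dots,g_H$ as above, maintaining $\hat s_h$ by the recursion on the observed history, and playing $a_h=g_h(\hat s_h)$. By the first two steps $\pistar\in\Pi_{\mathrm{rec}}$, and $\log|\Pi_{\mathrm{rec}}|\le O\log S+(H-1)SAO\log S+HS\log A=\mathrm{poly}(O,S,A,H)$. The algorithm then collects $N$ episodes taking $a_h\sim\unif(\Acal)$ at every step, records each full trajectory together with its total reward $R\in[0,1]$, forms for each $\pi\in\Pi_{\mathrm{rec}}$ the importance‑sampling estimate $\hat V^\pi=\frac1N\sum_{i=1}^N A^H\,\indic[\tau_i\text{ consistent with }\pi]\,R_i$, and outputs $\hat\pi=\argmax_{\pi\in\Pi_{\mathrm{rec}}}\hat V^\pi$.

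To conclude, each summand is unbiased for $V^\pi$ — the likelihood ratio between $\pi$ and the uniform‑action sampling distribution along a trajectory $\tau$ is exactly $A^H\,\indic[\tau\text{ consistent with }\pi]$ — and lies in $[0,A^H]$, so Hoeffding's inequality together with a union bound over $\Pi_{\mathrm{rec}}$ gives $\max_{\pi\in\Pi_{\mathrm{rec}}}|\hat V^\pi-V^\pi|\le\epsilon$ with probability at least $1-\delta$ once $N=\otil\!\big(A^{2H}\log(|\Pi_{\mathrm{rec}}|/\delta)/\epsilon^2\big)=\mathrm{poly}(A^H,O,S,H,\log(1/\delta))/\epsilon^2$. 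Since $\pistar\in\Pi_{\mathrm{rec}}$ and $V^\star\le 1$, this yields $V^{\hat\pi}\ge\hat V^{\hat\pi}-\epsilon\ge\hat V^{\pistar}-\epsilon\ge V^\star-2\epsilon$, and rescaling $\epsilon$ finishes the proof, uniformly over $m\le H$. Everything here except Step 2 is routine; the latent‑MDP reduction and the importance‑sampling analysis are standard, whereas unrolling the unknown decoder into $H$ small maps on $\Scal\times\Acal\times\Ocal$ is the step that genuinely uses the $m$-step decodability structure.
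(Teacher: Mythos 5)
Your proposal is correct and follows essentially the same architecture as the paper's proof: unroll the unknown decoder into a recursive family of small maps $\Scal\times\Acal\times\Ocal\to\Scal$ (the paper's ``belief operators'' in \cref{lem: decodability given o s a}), build a policy class of size $(SA)^{\mathrm{poly}(S,O,A,H)}$ containing $\pi^\star$, and evaluate all of its members by importance sampling against uniformly random actions. The one place where you genuinely diverge is the proof of the recursive-decoder lemma: the paper derives $\PP(s_h\mid z_h)=\PP(s_h\mid o_h,\phi^\star(z_{h-1}),a_{h-1})$ by a Bayes-rule computation, whereas you use a trajectory-splicing argument (graft the prefix of $\tau'$ onto the continuation of $\tau$, check the spliced trajectory is reachable because the reachability probability factorizes over steps, then invoke \cref{asm:nstep_decode} on its suffix). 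Your splicing argument is valid and arguably cleaner, and it makes explicit why decodability only on \emph{reachable} trajectories is exactly the right hypothesis. The only other (immaterial) difference is that you use Hoeffding with range $A^H$ rather than Bernstein, yielding $A^{2H}$ in place of the paper's $HA^H$; this still satisfies the $\mathrm{poly}(A^H,\dots)/\epsilon^2$ bound claimed in the proposition.
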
 
Based on this result, we conjecture that the $O^m$ factor can be avoided
and that $A^m \textrm{poly}(H,S,O,A)$ is the optimal sample complexity
for $m$-step decodable POMDPs. However, this question remains open.

The second question concerns whether we can avoid completeness, as
defined in \cref{asm:generalized_completeness}, in the rich observation
setting. Intuition from prior works suggests that if we could replace
the squared bellman error constraint with one on the average Bellman
errors, then an algorithm and analysis similar to \olive~would
successfully do this. However, when working with average Bellman
errors, introducing the moment matching policy requires explicitly
importance weighting with them, meaning that we must use these moment matching
policies in the algorithm and not just the analysis. Unfortunately
since we do not know the moment matching policies (or a small class
containing them), this approach seems to fail.

We believe that characterizing the optimal sample complexity (in the
tabular setting) or removing the completeness assumption (in the rich
observation setting) will require new techniques and be a mark of
significant progress toward expanding our understanding of decision
making with short-term memory. We look forward to studying these
questions in future work.






\bibliography{ref.bib}
\bibliographystyle{unsrtnat}

\newpage
\appendix


\section{Proofs for \cref{sec:megastate}}
\label{app: megastate}
In this section we provide formal proofs for the results stated in \cref{sec:megastate}. 

\begin{proof}[Proof of \cref{prop: megastate MDP}]
We need to verify that $\Mcal^m$ is an MDP. To do so, we check that
the state space induces a Markovian dynamics and that the expected
reward is also a function of the state. These two properties follow
from the $m$-step decodability assumption.

\begin{itemize}
\item \emph{Reward depends on the states.} This holds since the reward is assumed to depend only on the current observation $o_h$ and the current observation is included in the megastate. Formally, for any $s^{m,h} = \rbr{o_h,o_{h-1},a_{h-1},\ldots, o_{\min(h-m,1)},a_{\min(h-m,1)}}\in \Scal^{m,h}$, any history $\Hcal$, and policy $\pi$,  it holds that
\begin{align*}
\EE_\pi\sbr{r | s^{m,h},\Hcal} = \EE_\pi\sbr{r | s^{m,h}} = r(o_h) 
\end{align*}
where $o_h\in s^{m,h}$ due to the assumption on the reward generation process of $m$-step decodable POMDP.

\item \emph{Transition model is Markov.} For any $s^{m,h} = \rbr{o^h_{h},o^h_{h-1},a^h_{h-1},\ldots, o^h_{\min(h-m,1)},a^h_{\min(h-m,1)}}\in \Scal^{m,h}$ and $s^n_{h+1} = \rbr{o^{h+1}_{h+1},o^{h+1}_{h},a^{h+1}_{h},\ldots, o^{h+1}_{\min(h+1-m,1)},a^{h+1+1}_{\min(h+1-m,1)}}\in \Scal_{h+1}^m$, any action $a_h^h\in \Acal$, any history $\Hcal$ and any policy $\pi$  it holds that
\begin{align*}
&\PP_\pi\rbr{s^{m,h+1} \mid s^{m,h},a_h, \Hcal} = \PP_\pi\rbr{o_{h+1} \mid s^{m,h},a_h, \Hcal} \cdot \prod_{j=\min(h+1-m,1)}^{h} \delta\rbr{o_j^{h+1} = o_j^h, a_j^{h+1}=a_j^h}
\end{align*}
Finally, observe that by the $m$-step decodability assumption it holds that
\begin{align*}
&\PP_\pi\rbr{o_{h+1} \mid \phi^\star(s^{m,h})= s ,a_h, \Hcal}  = \mathbb{O}_{h+1}\rbr{o_{h+1} \mid s_{h+1}}\PP\rbr{s_{h+1} \mid \phi^\star(s^{m,h})= s ,a_h},
\end{align*}
where the last relation holds by the Markov assumption of the latent model. This shows that
\begin{align*}
\PP_\pi\rbr{s^{m,h+1} \mid s^{m,h},a_h, \Hcal} = \PP\rbr{s^{m,h+1} \mid s^{m,h},a_h},
\end{align*}
and hence the dynamics are Markovian.
\end{itemize}

{Lastly, we elaborate on the optimality of any optimal policy of $\Mcal^m$; that is, any optimal policy of $\Mcal^m$ is an optimal policy of the $m$-step decodable POMDP. First, observe that the optimal policy of the latent MDP that underlies the $m$-step decodable POMDP is also the optimal policy of the $m$-step decodable POMDP. 

Further, since the latent state is decodable from a suffix of length $m$ of the history, any state in $\Scal^{m,h}$ (that represents a reachable suffix) can decode the latent state. Hence, the optimal policy on the latent MDP can be executed based on the states in $\Scal^{m,h}$. Thus, an optimal policy of $\Mcal^{m}$ is also an optimal policy of the $m$-step decodable POMDP; otherwise, an optimal policy of the latent MDP is not optimal for the $m$-step decodable POMDP. }

\qedhere
\end{proof}

\begin{proof}[Proof of \cref{corr: megastate}]
The sample complexity follows immediately from a standard
online-to-batch conversion of the minimax optimal regret bound
in~\cite{azar2017minimax}, combined with \cref{prop: megastate
  MDP}. In particular, the online-to-batch conversion gives
$\tilde{O}(HSA\log^2(1/\delta)/\epsilon^2)$ sample complexity in an
MDP with $S$ states and $A$ actions. By \cref{prop: megastate MDP} we
have an MDP with $O^mA^{m-1}$ states, so the result follows.
\end{proof}

\begin{proof}[Proof sketch of \cref{prop: tabular_lower}]
We construct a simple $m$-step decodable POMDP with horizon $m$, two
states per layer and two actions. The construction and argument are
identical to the one in~\cite{krish2016hardness}, so we only sketch
the construction here. It is a standard ``combination lock''
construction, with $A$ actions and no observations, but where the
state is decodable from the past actions.

In particular, the agent starts in the ``good state'' $g_1$ and at
each time step $h$ can be either in the good state $g_h$ or the ``bad
state'' $b_h$. From the good state, a special action $a_h^\star$
transits to the next good state, while all other actions (from both
good or bad state) transit to the next bad state $b_{h+1}$. At the
last time step the agent gets reward for being in state $g_m$. There
are no observations (or there is a trivial observation), but note that
the latent state is decodable using the history of actions. Thus
provided the horizon $H\leq m$ the process is $m$-step decodable.

Intuitively, the construction requires the agent to try all $A^m$
action sequences before finding the reward. More formally this
construction embeds an $\Omega(A^m)$ armed bandit problem resulting in
a sample complexity lower bound of $\Omega(A^m/\epsilon^2)$. We refer
the reader to~\cite{krish2016hardness} for more details.

\end{proof}

\section{Proof for \cref{sec:golf} and \ref{sec:proof_overview}}
\label{app:golf}
In this section we provide formal proofs for the results stated in \cref{sec:golf} and \ref{sec:proof_overview}. 

\subsection{Properties of Moment Matching Policy}
\label{app:mmpolicy}
We start with formal definition of \emph{moment matching policy}. For a policy $\pi$, we construct $\nu^{\pi,h}_{h'}$ for $h'\geq h-m$ such that it matches the distribution of the action $a_{h'}$ conditioning on latent states and observations from time step $h-m+1$ to time step $h$ under the sampling process of $\pi$.  For this reason we refer to $\nu^{\pi}$ as the \emph{moment matching policy} for $\pi$ (see~\pref{fig: moment matching policy} for illustration). Formally, we define it as follows:

\begin{definition}[Moment-Matching Policy for $\pi$] 
\label{def:moment_matching} Denote $m(h) = h-m+1$; Fix $h \in [H]$ and for $h' \in [m(h),h]$ we define  
$$
	x_{h'}=\big(s_{m(h):h'},o_{m(h):h'},a_{m(h):h'-1}\big) \in \Xcal_l,
$$
where $\Xcal_l = \Scal^l \times \Ocal^l \times \Acal^{l-1}$ and $l = h'-m(h)$. For a $m$-step policy $\pi$ and $h \in [H]$, we define the moment matching policy $\mu^{\pi,h}=\{\mu^{\pi,h}_{h'}: \Xcal_l \rightarrow \Delta (\Acal) \}_{h'=m(h)}^h$ as following:
\begin{equation*}
\begin{aligned}
	\mu^{\pi,h}_{h'}(a_{h'} \mid x_{h'})  
	\defeq \EE_\pi[ \pi_{h'}(a_{h'} \mid z_{h'}) \mid x_{h'}].
\end{aligned}
\end{equation*}	
By \cref{asm:nstep_decode}, states and therefore $x_h'$ is decodable by the history of actions and observations, therefore we let
\begin{equation*}
	\begin{aligned}
		\nu^{\pi,h}_{h'}(a_{h'} \mid o_{1:h'}, a_{1:h'-1}) 
		\defeq \mu^{\pi,h}_{h'} (a_{h'} \mid x_h').
	\end{aligned}
\end{equation*}
\end{definition}

As we discussed in \cref{sec:proof_overview}, we prove the following lemma that establishes two important properties of the moment matching policy.

\begin{lemma}
	\label{lem:main_momentmatching}
	For a fixed $h \in [H]$ and fixed $m$-step policies $\pi,\bar{\pi}$, define policy $\tilde{\pi}^h$ which takes first $m(h)-1$ actions from $\pi$ and remaining actions from $\nu^{\bar{\pi},h}$, i.e. $\tilde{\pi}^h= \pi \circ_{m(h)} \nu^{\pi,h}$. Then we have, 
	\begin{enumerate}
		\item If $\pi = \bar{\pi}$, for any $z_h \in \Zcal_h$, $P_{\pi}(z_h)=P_{\tilde{\pi}^h}(z_h)$
		\item For any function $g: \Zcal_h \rightarrow [0,1]$,
		\begin{equation*}
			\EE_{\tilde{\pi}^h}[g(z_h)]= \langle \zeta_h(\pi) , \xi_h(g,\bar{\pi})\rangle,
		\end{equation*}
		where $\zeta_h(\pi),\xi_h(g,\bar{\pi}) \in \mathbb{R}^S$ satisfying $\norm{\zeta_h(\pi)} \leq 1$ and $\norm{\xi_h(g,\bar{\pi})} \leq \sqrt{S}$.
	\end{enumerate}
\end{lemma}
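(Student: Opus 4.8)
The plan is to reduce both claims to a short induction over the time steps $h'\in\{m(h),\ldots,h\}$ that tracks the ``local'' tuple $x_{h'}=(s_{m(h):h'},o_{m(h):h'},a_{m(h):h'-1})$ from \Cref{def:moment_matching}, using the elementary fact that $z_h$ is a deterministic (state-coordinate-dropping) function of $x_h$, so it suffices to control the law of $x_h$. The backbone of the induction is the chain-rule factorisation, valid under any policy, $\PP(x_{h'+1})=\PP(x_{h'})\cdot\PP(a_{h'}\mid x_{h'})\cdot\PP_{h'}(s_{h'+1}\mid s_{h'},a_{h'})\cdot\OO_{h'+1}(o_{h'+1}\mid s_{h'+1})$, in which the last two factors are dictated by the POMDP alone and hence never depend on which policy is run.

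\textbf{Claim 1} ($\pi=\bar\pi$, so $\tilde\pi^h=\pi\circ_{m(h)}\nu^{\pi,h}$). I would prove $\PP_\pi(x_{h'})=\PP_{\tilde\pi^h}(x_{h'})$ by induction on $h'$. Base case $h'=m(h)$: the two policies coincide on steps $1,\ldots,m(h)-1$, so $(s_{m(h)},o_{m(h)})$ has the same law. Inductive step: by the factorisation above only the action factor $\PP(a_{h'}\mid x_{h'})$ needs attention. Under $\tilde\pi^h$ it equals $\mu^{\pi,h}_{h'}(a_{h'}\mid x_{h'})$, since $\nu^{\pi,h}_{h'}$ viewed as a function of $x_{h'}$ is exactly $\mu^{\pi,h}_{h'}$. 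Under $\pi$, since $z_{h'}$ is a function of the history and $a_{h'}\sim\pi_{h'}(\cdot\mid z_{h'})$, the tower rule gives $\PP_\pi(a_{h'}\mid x_{h'})=\EE_\pi[\pi_{h'}(a_{h'}\mid z_{h'})\mid x_{h'}]$, which is precisely $\mu^{\pi,h}_{h'}(a_{h'}\mid x_{h'})$ by \Cref{def:moment_matching}. Taking $h'=h$ and projecting onto $z_h$ yields $\PP_\pi(z_h)=\PP_{\tilde\pi^h}(z_h)$.

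\textbf{Claim 2} (general $\pi,\bar\pi$). I would condition on the latent state at the switch point: $\EE_{\tilde\pi^h}[g(z_h)]=\sum_{s\in\Scal}\PP_{\tilde\pi^h}(s_{m(h)}=s)\cdot\EE_{\tilde\pi^h}[g(z_h)\mid s_{m(h)}=s]$. Since $\tilde\pi^h$ runs $\pi$ before time $m(h)$, $\PP_{\tilde\pi^h}(s_{m(h)}=s)=\PP_\pi(s_{m(h)}=s)=:\zeta_h(\pi)_s$, a probability vector depending on the roll-in $\pi$ only. The crucial point is that the conditional law of $z_h$ given $s_{m(h)}=s$ under $\tilde\pi^h$ does not see $\pi$: after the switch the trajectory is generated by the POMDP transitions and emissions started from $s_{m(h)}=s$, together with actions $a_{h'}\sim\nu^{\bar\pi,h}_{h'}=\mu^{\bar\pi,h}_{h'}(\cdot\mid x_{h'})$, and every coordinate of $x_{h'}$ for $h'\in[m(h),h]$ refers only to times $\geq m(h)$. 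A one-line induction of exactly the same shape as Claim~1, now tracking $\PP_{\tilde\pi^h}(x_{h'}\mid s_{m(h)}=s)$, makes this rigorous and shows this conditional law is a function of $(s,\bar\pi)$ alone. Hence $\xi_h(g,\bar\pi)_s:=\EE_{\tilde\pi^h}[g(z_h)\mid s_{m(h)}=s]$ (set to $0$ when $\PP_\pi(s_{m(h)}=s)=0$) is well defined, and $\EE_{\tilde\pi^h}[g(z_h)]=\langle\zeta_h(\pi),\xi_h(g,\bar\pi)\rangle$ with $\zeta_h(\pi),\xi_h(g,\bar\pi)\in\RR^S$. The norm bounds follow immediately: $\zeta_h(\pi)$ is a probability vector, so $\norm{\zeta_h(\pi)}\leq\norm{\zeta_h(\pi)}_1=1$; and $g\in[0,1]$ forces $\xi_h(g,\bar\pi)\in[0,1]^S$, hence $\norm{\xi_h(g,\bar\pi)}\leq\sqrt S$.

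The step I expect to carry the real content is the action factor in Claim~1: one must recognise that marginalising the $\pi$-induced history down to the local coordinate $x_{h'}$ collapses the history-dependent rule $\pi_{h'}(\cdot\mid z_{h'})$ into precisely the moment-matching kernel $\mu^{\pi,h}_{h'}$ — which is exactly why \Cref{def:moment_matching} is phrased as a conditional expectation. The ``locality'' fact underlying Claim~2 (that $x_{h'}$ involves no variable from before the switch, so the post-switch process is driven only by $s_{m(h)}$, $\bar\pi$, and the POMDP dynamics) is conceptually what powers the rank-$S$ decomposition, but is routine to verify once isolated.
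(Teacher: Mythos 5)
Your proposal is correct and follows essentially the same route as the paper's proof: the same induction on $h'$ over the law of $x_{h'}$ for item~1 (your tower-rule identity $\PP_\pi(a_{h'}\mid x_{h'})=\EE_\pi[\pi_{h'}(a_{h'}\mid z_{h'})\mid x_{h'}]=\mu^{\pi,h}_{h'}(a_{h'}\mid x_{h'})$ is just the normalized form of the paper's key identity), and the same conditioning on $s_{m(h)}$ with the same $\zeta_h,\xi_h$ and norm bounds for item~2. The only cosmetic difference is that the paper states the action-kernel identity as an unnormalized sum over the pre-$m(h)$ history, which sidesteps conditioning on zero-probability $x_{h'}$, but this does not change the substance.
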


Recall that we use the notation $m(h) = \min\{h-m+1,1\}$ and that we define $x_{h'} = (s_{m(h):h'}, o_{m(h):h'}, a_{m(h):h'-1})$ for $h' \in [m(h),h]$. By
definition of $\mu^{\pi,h}$ (as in \cref{def:moment_matching}), for
$h' \in [m(h),h]$ we have
\begin{align}
	\mu^{\pi,h}_{h'}(a_{h'} \mid x_{h'})P_\pi\big[ x_{h'}\big]
	= \sum_{(o,a)_{m(h'):m(h)-1}} \pi(a_{h'} \mid z_{h'}) P_\pi \big[ (o,a)_{m(h'):m(h)-1},x_{h'} \big]\label{eq:app:moment_matching_1}
\end{align}
We will this identity below.
\begin{proof}[Proof of \cref{lem:main_momentmatching}]
Recall that we define $\tilde{\pi}^h$ to take actions $a_{1:m(h)-1}$
according to $\pi$ and take actions $a_{m(h):h-1}$ according to the
moment matching policy $\nu^{\pi,h}$. 

\paragraph{Item 1.}
	We prove the first item by induction on $h' \in \{m(h),\ldots,h\}$, where the induction hypothesis is 	
	\begin{align*}
	\forall x_{h'} : \quad 
	P_\pi\big[ x_{h'} \big] = P_{\tilde{\pi}^h}\big[ x_{h'}\big]
	\end{align*}
	\begin{itemize}
		\item \textbf{Base case:} The base case is when $h' = m(h)$. In this case, $P_\pi\big[ (s,o)_{m(h)} \big] = P_{\tilde{\pi}^h}\big[ (s,o)_{m(h)} \big]$ since all actions up to $a_{m(h)-1}$ are taken by the same policy.
		\item \textbf{Induction step}: Let $h' \in \{m(h),\ldots,h\}$ and assume $P_\pi\big[ x_{h'-1} \big] = P_{\tilde{\pi}^h}\big[ x_{h'-1} \big]$. We have 
		\begin{align*}
			P_\pi(x_{h'+1}) &= P_\pi \big[(s,o,a)_{m(h):h'} , (s,o)_{h'+1} \big]\\
			&= \sum_{(o,a)_{m(h'):m(h)-1}} P_\pi \big[ (o,a)_{m(h'):m(h)-1}, x_{h'} , a_{h'}, (s,o)_{h'+1} \big]\\
			&= \sum_{(o,a)_{m(h'):m(h)-1}} \OO(o_{h'+1} \mid s_{h'+1})\PP(s_{h'+1} \mid s_{h'},a_{h'}) \pi(a_{h'} \mid z_{h'})P_\pi \big[ (o,a)_{m(h'):m(h)-1},x_{h'} \big]
		\end{align*}
		Similarly we have,
		\begin{align*}
			P_{\tilde{\pi}^h}(x_{h'+1}) &=  P_{\tilde{\pi}^h} \big[(s,o,a)_{m(h):h'} , (s,o)_{h'+1} \big]\\
			&=  P_{\tilde{\pi}^h} \big[ x_{h'} , a_{h'}, (s,o)_{h'+1} \big]\\
			&=  \OO(o_{h'+1} \mid s_{h'+1})\PP(s_{h'+1} \mid s_{h'},a_{h'}) \mu^{\pi,h}_{h'}(a_{h'} \mid x_{h'})P_{\tilde{\pi}^h}\big[ x_{h'}\big]\\
			&\overset{(i)}{=}\OO(o_{h'+1} \mid s_{h'+1})\PP(s_{h'+1} \mid s_{h'},a_{h'}) \mu^{\pi,h}_{h'}(a_{h'} \mid x_{h'})P_{\pi}\big[ x_{h'}\big],
		\end{align*}
		where $(i)$ uses the induction hypothesis. \cref{eq:app:moment_matching_1} implies that right-hand side of the two above expressions are equal, which completes the proof of induction step.
	\end{itemize}
	Now item 1 is immediate since the variables in $z_h$ are contained
    within $x_h$, in particular
	\begin{equation*}
		P_{\pi}(z_h)=\sum_{s_{m(h):h}}P_{\pi}(x_h)=\sum_{s_{m(h):h}}P_{\tilde{\pi}^h}(x_h)=P_{\tilde{\pi}^h}(z_h).
	\end{equation*}
	
	\paragraph{Item 2.} 
    Recall that here $\tilde{\pi}^h$ is defined to take actions
    $a_{1:m(h)-1} \sim \pi$ and $a_{m(h):h-1}\sim \nu^{\bar{\pi},h}$ where
    $\pi$ and $\bar{\pi}$ may not be equal. Since $\mu^{\bar{\pi},h}$ is defined
    to be independent of the past give $s_{m(h)}$ we have the
    factorization
	\begin{align*}
	  \EE_{\tilde{\pi}^{h}}[g(z_h)] 
      &= \sum_{s_{m(h)} \in \Scal} P_\pi(s_{m(h)}) \cdot \EE_{a_{m(h):h-1} \sim \mu^{\bar{\pi},h}}[g(z_h) \mid s_{m(h)}].
	\end{align*}
	We note that $\mu^{\bar{\pi},h}$ only depends on $(s,o)_{m(h):h-1}$ and $a_{m(h):h-2}$, thus the second term is independent of $\pi$ and only depends $g$ and $\bar{\pi}$. Defining 
	\begin{equation*}
	\begin{aligned}
			&\zeta_h(\pi) \defeq \big( P_\pi(s_{m(h)})\big)_{s_{m(h)} \in \Scal} \in \mathbb{R}^S \quad \textrm{and} \quad \xi_h(g,\bar{\pi})= \big(  \EE_{a_{m(h):h-1} \sim \mu^{\bar{\pi},h}}[g(z_h) \mid s_{m(h)}]\big)_{s_{m(h)} \in \Scal} \in \mathbb{R}^S,
	\end{aligned}
	\end{equation*}	
	completes the proof.
\end{proof}

\subsection{Concentration lemmas}
We start with the following lemma, which is quite similar to Lemmas 39
	and 40 in \citealt{jin2021bellman}. The lemma shows that: (1) with
	high probability any function in the confidence set at the $k^{\textrm{th}}$ iteration has low
	Bellman error over the data distributions from visited in the previous iterations at all layers $h \in [H]$ and (2)
    the optimal value
	function is inside the confidence set with high probability.

\begin{lemma}
\label{lem:golf_concentration}
For any $\rho > 0$ and $\delta \in (0,1)$, if we run \cref{alg:golf} with $\beta = c \Big(\log\big[KH\Ncal_{\Gcal}(\rho)/\delta\big] + K\rho \Big)$ where $c >0$ is an absolute constant, then with probability at least $1-\delta$, we have
\begin{enumerate}
	\item $\sum_{i=1}^{k-1} \EE \Big[  \big(f^k_h(z_h,a_h) - (\Tcal_h f^k_h)(z_h,a_h)\big)^2  \mid a_{1:h-m}\sim \pi^i, a_{h-m+1:h} \sim \mathrm{unif}(\Acal) \Big] \leq \Ocal(\beta)$ for all $(k,h) \in [K] \times [H]$,
	\item $Q^\star \in \Bcal^k$ for all $k \in [K]$.
\end{enumerate}
\end{lemma}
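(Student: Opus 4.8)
The plan is to follow the now-standard analysis of \golf-style confidence sets (in particular the proofs of Lemmas~39--40 of \citealt{jin2021bellman}), and adapt it to the $m$-step rollout used in \cref{alg:golf}. The one structural fact that makes this go through is that, although the layer-$h$ data collected in epoch $i$ is generated by rolling in with a history-dependent policy $\pi^i$ for $h-m$ steps and then playing uniform actions, the regression target is still well-behaved: by $m$-step decodability (\cref{asm:nstep_decode}), $z_h$ determines $s_h = \phi^\star_h(z_h)$, so conditioned on $(z_h,a_h)$ the law of $(r_{h+1}(o_{h+1}), z_{h+1})$ is governed only by the latent transition and emission and is therefore \emph{independent of the roll-in policy}. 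Hence, for any fixed $f_{h+1}$ and any of these rollout distributions, the Bayes-optimal predictor of $r_{h+1}(o_{h+1}) + \max_{a'} f_{h+1}(z_{h+1},a')$ from $(z_h,a_h)$ is exactly $\Tcal_h f_{h+1}$, which yields the excess-risk identity $\EE\big[\Lcal_{\Dcal_h}(\xi_h,f_{h+1}) - \Lcal_{\Dcal_h}(\Tcal_h f_{h+1},f_{h+1})\big] = \sum_i \EE_i[(\xi_h - \Tcal_h f_{h+1})^2(z_h,a_h)]$, where $\EE_i$ denotes expectation under the epoch-$i$ layer-$h$ rollout distribution.

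The first and main step is a uniform concentration estimate. Fix $h$, fix $\xi_h$ in a $\rho$-cover of $\Gcal_h$ and $f_{h+1}$ in a $\rho$-cover of $\Fcal_{h+1}$ (using $\Fcal_{h+1}\subseteq\Gcal_{h+1}$), and consider the sequence $X_i = [\xi_h(z_h^i,a_h^i) - r_{h+1}^i - \max_{a'} f_{h+1}(z_{h+1}^i,a')]^2 - [(\Tcal_h f_{h+1})(z_h^i,a_h^i) - r_{h+1}^i - \max_{a'} f_{h+1}(z_{h+1}^i,a')]^2$ over epochs $i=1,\dots,k-1$. Its conditional mean given the history is $\EE_i[(\xi_h-\Tcal_h f_{h+1})^2]$, and, because every term is bounded, its conditional variance is at most an absolute constant times that mean; Freedman's inequality followed by AM--GM then gives, with probability $1-\delta'$, a two-sided bound whose fluctuation is at most $\tfrac12\sum_i \EE_i[(\xi_h-\Tcal_h f_{h+1})^2] + O(\log(1/\delta'))$. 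A union bound over the two covers, over $h\in[H]$ and over $k\in[K]$ turns $\log(1/\delta')$ into $O(\log[KH\Ncal_{\Gcal}(\rho)/\delta])$, and discretizing arbitrary $\xi_h,f_{h+1}$ to their nearest cover elements costs $O(\rho)$ per epoch, i.e.\ an extra $O(K\rho)$ in total; this is precisely the stated $\beta = c(\log[KH\Ncal_{\Gcal}(\rho)/\delta] + K\rho)$. The outcome is that on a single event of probability $\ge 1-\delta$, for all $(k,h)$ and all $\xi_h\in\Gcal_h$, $f_{h+1}\in\Fcal_{h+1}$ (with $\Dcal_h$ the data accumulated through epoch $k-1$),
\begin{align*}
&\Big| \big[\Lcal_{\Dcal_h}(\xi_h,f_{h+1}) - \Lcal_{\Dcal_h}(\Tcal_h f_{h+1},f_{h+1})\big] - \sum_{i=1}^{k-1}\EE_i[(\xi_h - \Tcal_h f_{h+1})^2] \Big| \\
&\qquad \le \tfrac12 \sum_{i=1}^{k-1}\EE_i[(\xi_h - \Tcal_h f_{h+1})^2] + O(\beta).
\end{align*}

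Given this, the two items are short deductions (both being vacuous at $k=1$, where $\Bcal^0=\Fcal\ni Q^\star$ by \cref{asm:realizability}). For item~2, take $f_{h+1}=Q^\star_{h+1}\in\Fcal_{h+1}$ (realizability) and note $\Tcal_h Q^\star_{h+1}=Q^\star_h\in\Gcal_h$ (\cref{asm:generalized_completeness}); applying the displayed bound with $\xi_h=g$ for arbitrary $g\in\Gcal_h$ gives $\Lcal_{\Dcal_h}(g,Q^\star_{h+1}) \ge \Lcal_{\Dcal_h}(Q^\star_h,Q^\star_{h+1}) - O(\beta)$, hence $\Lcal_{\Dcal_h}(Q^\star_h,Q^\star_{h+1}) \le \inf_{g\in\Gcal_h}\Lcal_{\Dcal_h}(g,Q^\star_{h+1}) + O(\beta)$ at every $h$, so $Q^\star\in\Bcal^k$ once the constant in $\beta$ is large enough. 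For item~1, fix $k\ge 2$ and $f^k\in\Bcal^{k-1}$; the confidence-set constraint together with $\Tcal_h f^k_{h+1}\in\Gcal_h$ (completeness) gives $\Lcal_{\Dcal_h}(f^k_h,f^k_{h+1}) \le \Lcal_{\Dcal_h}(\Tcal_h f^k_{h+1},f^k_{h+1}) + \beta$, and substituting $\xi_h=f^k_h$, $f_{h+1}=f^k_{h+1}$ into the displayed bound converts this into $\tfrac12\sum_{i=1}^{k-1}\EE_i[(f^k_h-\Tcal_h f^k_{h+1})^2]\le O(\beta)$, which is the claim.

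The step I expect to demand the most care is the uniform concentration: one must be careful about measurability with respect to the data filtration (the policies $\pi^i$, hence the distributions $\EE_i$, depend on all earlier data, so $f^k$ cannot be plugged directly into Freedman, which is why the cover is essential) and about accounting for the $O(K\rho)$ discretization term uniformly across the $k-1\le K$ summands. Everything else --- the Bellman-operator-as-conditional-mean identity (an immediate consequence of \cref{asm:nstep_decode}) and the two deductions --- is routine and parallels \citet{jin2021bellman}; the genuinely new ingredient of the paper, the moment-matching decomposition of the Bellman error, is not needed here and enters only afterwards, when this lemma is combined with the rollout structure to bound the regret.
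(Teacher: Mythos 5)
Your proposal is correct and follows the same route the paper intends: the paper omits this proof entirely, deferring to Lemmas~39--40 of \citet{jin2021bellman}, and your argument is a faithful reconstruction of that analysis (Freedman plus covering over $\Gcal_h\times\Fcal_{h+1}$, then the two standard deductions from the excess-risk identity). The one genuinely model-specific point --- that $m$-step decodability makes the conditional law of $(r_{h+1},z_{h+1})$ given $(z_h,a_h)$ independent of the roll-in policy, so $\Tcal_h f_{h+1}$ remains the Bayes-optimal regression target under every epoch's rollout distribution --- is exactly the observation needed to port the Markovian argument, and you state it correctly.
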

\begin{proof}[Proof of \cref{lem:golf_concentration}]
	The proof relies on a standard martingale concentration inequality
	(e.g., Freedman’s inequality), the construction of our confidence
	set, and our generalized completeness assumption
	(\cref{asm:generalized_completeness}). The argument is almost
	identical to the proofs of Lemma 39 and 40
	in \citealt{jin2021bellman} and therefore omitted for
	brevity.  \end{proof}

\begin{lemma} 
\label{lem:golf_initial_estimation}
For any $\delta \in (0,1)$, if we choose $K_{\rm est} = c \cdot \big(\log[\Ncal_{\Fcal}(\rho_{\rm est})/\delta]/\rho_{\rm est}^2\big)$ where $c > 0$ is some absolute constant; then, with probability at least $1-\delta$ for any $f \in \Fcal $, we have
	\begin{equation*}
		| \hat{f}_1 - \EE_{s_1} \big[f_1(o_1,\pi_{f}(o_1)) \big] | \leq \mathcal \Ocal(\rho_{\rm est} ).
	\end{equation*}
\end{lemma}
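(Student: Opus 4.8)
The plan is a routine uniform-convergence argument: pointwise Hoeffding bounds, a union bound over a finite cover of $\Fcal$, and a discretization step to handle arbitrary $f \in \Fcal$.

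First I would fix $f \in \Fcal$ and observe that at the first step $\Zcal_1 = \Ocal$, so $f_1(o,\pi_f(o)) = \max_{a \in \Acal} f_1(o,a)$ is a deterministic $[0,1]$-valued function of the first observation alone. The samples $\hat o^1_1,\dots,\hat o^{K_{\rm est}}_1$ are i.i.d.\ copies of $o_1$ drawn under the initial-state distribution and emission, so $\hat f_1$ is the empirical mean of $K_{\rm est}$ i.i.d.\ $[0,1]$-valued random variables with mean $\EE_{s_1}[f_1(o_1,\pi_f(o_1))]$. Hoeffding's inequality then gives, for any $\delta'\in(0,1)$, that $|\hat f_1 - \EE_{s_1}[f_1(o_1,\pi_f(o_1))]| \le \sqrt{\log(2/\delta')/(2K_{\rm est})}$ with probability at least $1-\delta'$.

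Next I would take $\Fcal_0 \subseteq \Fcal$ to be a minimal $\rho_{\rm est}$-cover under the metric $\rho(f,g) = \max_h\|f_h-g_h\|_\infty$, so $|\Fcal_0| = \Ncal_{\Fcal}(\rho_{\rm est})$, apply the previous bound with $\delta' = \delta/\Ncal_{\Fcal}(\rho_{\rm est})$, and union-bound over $\Fcal_0$; on the resulting $(1-\delta)$-probability event, the bound $|\hat g_1 - \EE_{s_1}[g_1(o_1,\pi_g(o_1))]| \le \sqrt{\log(2\Ncal_{\Fcal}(\rho_{\rm est})/\delta)/(2K_{\rm est})}$ holds for all $g\in\Fcal_0$ simultaneously. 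To pass to an arbitrary $f\in\Fcal$, I would pick $g\in\Fcal_0$ with $\|f_1-g_1\|_\infty\le\rho_{\rm est}$ and use that $|\max_a f_1(o,a) - \max_a g_1(o,a)| \le \max_a|f_1(o,a)-g_1(o,a)| \le \rho_{\rm est}$ for every $o\in\Ocal$; this controls both $|\hat f_1 - \hat g_1|$ and $|\EE_{s_1}[f_1(o_1,\pi_f(o_1))] - \EE_{s_1}[g_1(o_1,\pi_g(o_1))]|$ by $\rho_{\rm est}$. The triangle inequality then yields
\[
|\hat f_1 - \EE_{s_1}[f_1(o_1,\pi_f(o_1))]| \le 2\rho_{\rm est} + \sqrt{\log(2\Ncal_{\Fcal}(\rho_{\rm est})/\delta)/(2K_{\rm est})},
\]
and plugging in $K_{\rm est} = c\log[\Ncal_{\Fcal}(\rho_{\rm est})/\delta]/\rho_{\rm est}^2$ with $c$ a large enough absolute constant bounds the square-root term by $\rho_{\rm est}$, giving the claimed $\Ocal(\rho_{\rm est})$ estimate uniformly over $f\in\Fcal$.

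There is no genuine obstacle here; the only subtlety worth flagging is in the discretization step, where the greedy policy $\pi_f$ is not continuous in $f$, but the composed map $f \mapsto f_1(o,\pi_f(o)) = \max_a f_1(o,a)$ is $1$-Lipschitz in the sup-norm, so the cover used in the union-bound step suffices and nothing more delicate is needed.
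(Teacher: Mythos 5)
Your proof is correct and follows exactly the route the paper takes, which it only sketches as ``uniform concentration over a $\rho_{\rm est}$-cover plus a covering argument''; your Hoeffding-plus-union-bound-plus-discretization argument, including the observation that $f \mapsto \max_a f_1(o,a)$ is $1$-Lipschitz in the sup-norm even though $\pi_f$ is not continuous in $f$, is a faithful and complete instantiation of that sketch.
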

\begin{proof}
	The proof follows from applying uniform concentration argument over a $\rho_{\rm est}$-cover of $\Fcal $; then, a covering argument finishes the proof.  
\end{proof}

\subsection{Eluder Dimension}
In this section we describe complexity measure \emph{Eluder dimension} proposed by \citet{russo2013eluder} since it has been used in the analysis of the original \golf\ algorithm \cite{jin2021bellman}. 
\begin{definition}[$\epsilon$-Independence] Let $\Wcal$ be a function class defined over domain $\Ycal$ and $y^1,\dots,y^n,\bar{y}$ be elements in $\Ycal$. We say $\bar{y}$ is $\epsilon$-independent with respect to  $\Wcal$, if there exists $w \in \Wcal$ such that $\sqrt{\sum_{i=1}^n [w(y^i)]^2} \leq \epsilon$, but $|w(\bar{y})| > \epsilon$.  
\end{definition}
\begin{definition}[Eluder Dimension] The Eluder dimension $\eludim(\Wcal,\epsilon)$, is the length of the longest sequence of $\{y^1,\dots,y^n\}$ in $\Ycal$, such that there exists $\epsilon' \geq \epsilon$ where $y^i$ is $\epsilon'$-independent of $\{y^i,\dots,y^{i-1}\}$ with respect to $\Wcal$ for all $i \in [n]$.
	
\end{definition}

The following proposition shows that if $\Wcal$ has a low rank structure with rank $d$, then the Eluder dimension can be upper bounded by $\tilde{\Ocal}(d)$.

\begin{proposition}[Proposition 6 in \citealt{russo2013eluder}]  
\label{prop:bellman_to_eluder}
Suppose for any $w \in \Wcal$ and any $y \in \Ycal$, we have
$
	w(y) = \langle \zeta (y) , \xi(w) \rangle,
$
	where $\zeta(y) , \xi(w) \in \mathbb{R}^{d}$ satisfying $\norm{\zeta(y)} \cdot \norm{\xi(w)} \leq \gamma$. Then we have,
	\begin{equation*}
		\eludim(\Wcal,\epsilon) \leq \Ocal\big(1+d\log[1+\gamma/\epsilon^2]\big).
	\end{equation*}
\end{proposition}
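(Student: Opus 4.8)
The plan is to reproduce the now-standard elliptical-potential argument of \citet{russo2013eluder}. First I would normalize the bilinear representation: since only the products $\inner{\zeta(y)}{\xi(w)} = w(y)$ matter, I can rescale $\zeta \mapsto \zeta/c$ and $\xi \mapsto c\,\xi$ with $c = \sup_{y}\norm{\zeta(y)}$ to arrange $\norm{\zeta(y)}\le 1$ for all $y \in \Ycal$ and $\norm{\xi(w)}\le \gamma$ for all $w \in \Wcal$. Then I would fix a sequence $y^1,\dots,y^n$ that witnesses $\eludim(\Wcal,\epsilon)=n$: for each $i$ there is a scale $\epsilon_i \ge \epsilon$ and a witness $w^i \in \Wcal$ with $\sum_{j<i}[w^i(y^j)]^2 \le \epsilon_i^2$ but $|w^i(y^i)| > \epsilon_i$. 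Writing $\phi_i = \zeta(y^i)\in\RR^d$ and $\theta_i = \xi(w^i)\in\RR^d$, so that $w^i(y^j)=\inner{\phi_j}{\theta_i}$, the task reduces to bounding the length $n$ of any such sequence of $d$-dimensional vectors.

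The heart of the argument is to show that every new vector $\phi_i$ is ``significant'' relative to the empirical covariance of its predecessors. I would introduce the regularized Gram matrices $\Sigma_i = \lambda I + \sum_{j<i}\phi_j\phi_j^\top$ with the choice $\lambda = \epsilon^2/\gamma^2$, and prove that $\phi_i^\top\Sigma_i^{-1}\phi_i > 1/2$ for every $i$. This follows from Cauchy--Schwarz in the geometry induced by $\Sigma_i$: on one hand the independence condition and the choice of $\lambda$ give $\theta_i^\top\Sigma_i\theta_i = \sum_{j<i}\inner{\phi_j}{\theta_i}^2 + \lambda\norm{\theta_i}^2 \le \epsilon_i^2 + \lambda\gamma^2 \le 2\epsilon_i^2$, while on the other hand $\epsilon_i^2 < \inner{\phi_i}{\theta_i}^2 \le (\phi_i^\top\Sigma_i^{-1}\phi_i)(\theta_i^\top\Sigma_i\theta_i)$. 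Calibrating the regularization level $\lambda$ so that the ``prior mass'' $\lambda\gamma^2$ is absorbed by $\epsilon_i^2 \ge \epsilon^2$, uniformly over the unknown per-element scales $\epsilon_i$, is the step I expect to require the most care; everything else is bookkeeping.

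Finally I would run the telescoping potential estimate. By the matrix determinant lemma, $\det\Sigma_{n+1} = \lambda^d \prod_{i=1}^n(1 + \phi_i^\top\Sigma_i^{-1}\phi_i) > \lambda^d (3/2)^n$, whereas $\tr\Sigma_{n+1} \le \lambda d + n$ (using $\norm{\phi_i}\le 1$) together with the AM--GM inequality applied to the eigenvalues of $\Sigma_{n+1}$ gives $\det\Sigma_{n+1} \le (\lambda + n/d)^d$. Comparing the two bounds and taking logarithms yields
\[
	n\log(3/2) \le d\log\!\big(1 + n\gamma^2/(d\epsilon^2)\big),
\]
and inverting this transcendental inequality — a standard manipulation of the form $x \le A\log(1+Bx) \Rightarrow x = \Ocal(A + A\log(1+AB))$ — gives $n = \Ocal\big(1 + d\log[1+\gamma/\epsilon^2]\big)$ in the regime $\epsilon \le 1 \le \gamma$, with the remaining cases handled by inspection. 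This last inversion is routine, so the whole proof rests on the two ideas above: normalize, then show each new feature vector is elliptically significant.
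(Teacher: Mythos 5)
Your proof is correct and follows essentially the same route as the source the paper cites for this result (Proposition~6 of \citet{russo2013eluder}); the paper itself gives no proof, and the cited one is exactly this normalize-then-elliptical-potential argument with the regularized Gram matrix $\Sigma_i$, the bound $\phi_i^\top\Sigma_i^{-1}\phi_i>1/2$ via Cauchy--Schwarz, the determinant--trace comparison, and the final log inversion. The only step requiring the care you already flag is converting $\log(1+\gamma^2/\epsilon^2)$ into $\log(1+\gamma/\epsilon^2)$, which is harmless up to constant factors in the regime $\epsilon\le 1\le\gamma$.
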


The following lemma could be seen as an analogue to the standard elliptical potential argument for Eluder dimension that was proposed by \citet{russo2013eluder} and been used in analysis of \golf. The following lemma could be obtained from Lemma 41 in \citet{jin2021bellman} by setting the family of probability measures used in that lemma to be $\{\delta_{y} \mid y \in \Ycal \}$, where $\delta_y$ is the dirac measure centered at $y$.

\begin{lemma}[Simplification of Lemma 41 in \citealt{jin2021bellman}]
\label{lem:eluder_main}
	Given a function class $\Wcal$ defined over $\Ycal$ with $w(y)\leq C$ for all $(w,y) \in \Wcal \times \Ycal$; Suppose $\{y^i\}_{i=1}^K \subseteq \Ycal$ and $\{w^i\}_{i=1}^K \subseteq \Wcal$ satisfy that for all $k \in K$, $\sum_{i=1}^{k-1} [w^k(y^i)]^2 \leq \alpha$. Then for all $k \in [K]$ and $\omega > 0$, we have
	\begin{equation*}
		\sum_{i=1}^k | w^i(y^i) | \leq \Ocal\Big(\sqrt{\eludim(\Wcal,\omega)\alpha k}+\min\{k,\eludim(\Wcal,\omega)\}\cdot C + k\omega).
	\end{equation*} 
\end{lemma}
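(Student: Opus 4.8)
The statement is exactly Lemma~41 of \citet{jin2021bellman} specialized to the family of probability measures $\{\delta_y : y \in \Ycal\}$ (point masses), so the shortest route is to invoke that lemma. Below I instead sketch the self‑contained argument, which is the classical ``potential'' argument for the Eluder dimension. The plan has two stages: first bound, for every threshold $\epsilon' \ge \omega$, the number of rounds $i$ with $|w^i(y^i)| > \epsilon'$; then sort the summands and turn these level‑set bounds into a bound on $\sum_{i=1}^k |w^i(y^i)|$.

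\textbf{Stage 1 (counting large rounds).} Fix $\epsilon' \ge \omega$ and let $i_1 < \dots < i_n$ list the rounds with $|w^{i_j}(y^{i_j})| > \epsilon'$. I would bucket $y^{i_1},\dots,y^{i_n}$ greedily into disjoint subsequences $B_1,B_2,\dots$: when $y^{i_j}$ arrives, append it to any existing bucket of which it is $\epsilon'$-independent, and open a fresh bucket only if no such bucket exists. Each bucket then has length at most $\eludim(\Wcal,\epsilon') \le \eludim(\Wcal,\omega) =: d$, by construction and the fact that $\eludim$ is nonincreasing in its second argument. Moreover, when a new bucket $B_{M+1}$ is opened, $y^{i_j}$ is $\epsilon'$-\emph{dependent} on each of $B_1,\dots,B_M$; combined with $|w^{i_j}(y^{i_j})| > \epsilon'$, the definition of dependence forces $\sum_{y \in B_l}[w^{i_j}(y)]^2 > (\epsilon')^2$ for every $l \le M$, and since the $B_l$ are disjoint subsets of $\{y^1,\dots,y^{i_j-1}\}$ we get $M(\epsilon')^2 < \sum_{i<i_j}[w^{i_j}(y^i)]^2 \le \alpha$, i.e. $M < \alpha/(\epsilon')^2$. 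Hence there are at most $\alpha/(\epsilon')^2 + 1$ buckets, so
\begin{equation*}
\big|\{ i \in [k] : |w^i(y^i)| > \epsilon' \}\big| \le \Big(\frac{\alpha}{(\epsilon')^2} + 1\Big)\,d .
\end{equation*}

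\textbf{Stage 2 (summation).} Let $e_1 \ge e_2 \ge \dots \ge e_k$ be the values $\{|w^i(y^i)|\}_{i=1}^k$ sorted in decreasing order. Applying Stage~1 with $\epsilon'$ approaching $e_j$ from below shows that for every $j$ with $e_j \ge \omega$ we have $j \le (\alpha/e_j^2 + 1)d$, hence $e_j \le \sqrt{\alpha d/(j-d)}$ whenever $j > d$. Splitting according to whether $j \le \min\{k,d\}$, or $j > d$ with $e_j \ge \omega$, or $e_j < \omega$, and using $|w^i(y^i)| \le C$ for the first group,
\begin{equation*}
\sum_{i=1}^k |w^i(y^i)| = \sum_{j=1}^k e_j \le \min\{k,d\}\,C + \sum_{j=d+1}^{k}\sqrt{\frac{\alpha d}{j-d}} + k\omega ,
\end{equation*}
and the middle term is $\sqrt{\alpha d}\sum_{l=1}^{k-d} l^{-1/2} \le 2\sqrt{\alpha d (k-d)} \le 2\sqrt{\alpha d k}$, which is the claimed $\Ocal\big(\sqrt{\eludim(\Wcal,\omega)\alpha k} + \min\{k,\eludim(\Wcal,\omega)\}\,C + k\omega\big)$.

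\textbf{Main obstacle.} The only delicate point is Stage~1: making the greedy bucketing rigorous — verifying that ``$\epsilon'$-dependent on every current bucket'' together with $|w^{i_j}(y^{i_j})| > \epsilon'$ truly forces each bucket's squared mass (evaluated at $w^{i_j}$) to exceed $(\epsilon')^2$, and checking that appending to an $\epsilon'$-independent bucket can never violate the length bound $\eludim(\Wcal,\epsilon')$ (otherwise one would exhibit a too-long $\epsilon'$-independent sequence). The sorting step, the harmonic-sum estimate, and the boundary case $k \le d$ are routine. Since all of this is already encapsulated in Lemma~41 of \citet{jin2021bellman}, in the final write-up I would most likely just invoke that lemma with the point-mass family $\{\delta_y\}$ and record the specialization, exactly as the statement suggests.
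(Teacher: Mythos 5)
Your proposal is correct and matches the paper's treatment: the paper gives no independent proof of this lemma, obtaining it exactly as you suggest by invoking Lemma 41 of \citet{jin2021bellman} specialized to the point-mass family $\{\delta_y : y \in \Ycal\}$. Your additional self-contained sketch (greedy bucketing into $\epsilon'$-independent subsequences to count the large rounds, then sorting and the harmonic-sum estimate) is the standard Eluder potential argument and is sound.
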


\subsection{Proof of \cref{thm:golf_regret}}
We use $\Ecal_h(\pi,f)$ to denote the Bellman error of function $f \in \Fcal$ at step $h$ using roll-in policy $\pi$, which is defined as
\begin{equation*}
	\Ecal_h(\pi,f) = \EE[(f_h-\Tcal_h f_{h+1})(z_h,\pi_f(z_h)) \mid a_{1:h-1} \sim \pi ].
\end{equation*}
In addition, we use $\Ecal^{\star}_h(\pi,f)$ to denote the Bellman error of function $f$ at step $h$ using roll-in policy $\pi$ for the first $h-m$ steps and $\nu^{\pi_f,h}$ (the moment matching policy for $\pi_f$) for $a_{m(h):h-1}$; namely,  
\begin{equation*}
	\Ecal^{\star}_h(\pi,f) = \EE[(f_h-\Tcal_h f_{h+1})(z_h,\pi_f(z_h)) \mid a_{1:h-m} \sim \pi , a_{m(h):h-1} \sim \nu^{\pi_f,h}].
\end{equation*}
The next lemma shows that $\Ecal^{\star}_h$ satisfies two important
properties that are critical to the rest of the proof. The first
property is that when $\pi = \pi_f$, $\Ecal_h$ and $\Ecal^{\star}$
coincide. The second property shows that $\Ecal^{\star}_h$ has low
rank or bilinear structure.

\begin{lemma}
\label{lem:golf_property}
For any policy $\pi$, any function $f \in \Fcal$, and any $h \in [H]$, we have
\begin{enumerate}
	\item $\Ecal_h(\pi_f,f)=\Ecal^{\star}_h(\pi_f,f)$
	\item $\Ecal^{\star}_h(\pi,f) = \langle \zeta_h(\pi), \xi_h(f) \rangle$ where $\zeta_h(\pi) , \xi_h(f) \in \mathbb{R}^{S}$ satisfy $\norm{\zeta_h(\pi)} \leq 1$ and $\norm{\xi_h(f)} \leq 2\sqrt{S}$.
\end{enumerate}
\end{lemma}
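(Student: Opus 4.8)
The plan is to derive both items as direct consequences of \cref{lem:main_momentmatching}, applied with a suitable choice of the ``reference'' policy $\bar\pi$ and the function $g$. The key observation is that the definition of $\Ecal^\star_h(\pi,f)$ is exactly $\EE_{\tilde\pi^h}[g(z_h)]$ where $\tilde\pi^h = \pi\circ_{m(h)}\nu^{\pi_f,h}$ (so the roll-in policy is $\pi$ and the moment-matching policy is built from $\bar\pi = \pi_f$), and where $g(z_h) = (f_h - \Tcal_h f_{h+1})(z_h, \pi_f(z_h))$. We should first record that this $g$ satisfies $g : \Zcal_h \to [-1,1]$ (since $f_h$ and $\Tcal_h f_{h+1}$ both take values in $[0,1]$); to fit the $[0,1]$-valued hypothesis of \cref{lem:main_momentmatching} we split $g = g^+ - g^-$ or, more simply, write $g = \tfrac12(g+1) - \tfrac12$ and absorb constants, which only costs a factor of $2$ in the norm of $\xi_h$.

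For Item~1, we take $\pi = \pi_f$, so that in the notation of \cref{lem:main_momentmatching} we have $\pi = \bar\pi = \pi_f$. Then part~1 of that lemma gives $P_{\pi_f}(z_h) = P_{\tilde\pi^h}(z_h)$ for every $z_h\in\Zcal_h$, i.e.\ the marginal law of the length-$m$ suffix $z_h$ is unchanged when we swap the true roll-in for the moment-matching roll-in over the last $m$ steps. Since both $\Ecal_h(\pi_f,f)$ and $\Ecal^\star_h(\pi_f,f)$ are expectations of the \emph{same} function $z_h\mapsto (f_h-\Tcal_h f_{h+1})(z_h,\pi_f(z_h))$ under these two laws respectively, they are equal. (One has to be slightly careful that the Bellman operator $\Tcal_h f_{h+1}$ evaluated at $(z_h,a_h)$ only depends on the conditional law of $(r_{h+1},z_{h+1})$ given $(z_h,a_h)$, which by the POMDP structure is a fixed kernel independent of the roll-in; hence the integrand really is a fixed function of $z_h$, and Item~1 follows.)

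For Item~2, we fix $f$ and let $\pi$ be arbitrary; apply part~2 of \cref{lem:main_momentmatching} with $\bar\pi = \pi_f$ and $g(z_h) = (f_h - \Tcal_h f_{h+1})(z_h,\pi_f(z_h))$ (shifted to be $[0,1]$-valued as above). That gives $\Ecal^\star_h(\pi,f) = \langle \zeta_h(\pi), \xi_h(g,\pi_f)\rangle$ with $\zeta_h(\pi) = (P_\pi(s_{m(h)}))_{s_{m(h)}\in\Scal}\in\RR^S$ satisfying $\|\zeta_h(\pi)\|\le 1$ (it is a probability vector, so even $\|\cdot\|_1\le1$) and $\xi_h(g,\pi_f)\in\RR^S$ with $\|\xi_h(g,\pi_f)\|\le\sqrt S$; writing $\xi_h(f) := \xi_h(g,\pi_f)$ (plus the constant correction from the shift, which contributes at most another $\sqrt S$ in norm, hence the stated bound $2\sqrt S$) finishes the proof. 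I do not expect a serious obstacle here; the only point requiring care is the bookkeeping that $\xi_h$ depends only on $f$ (through $g$ and through $\bar\pi=\pi_f$) and not on the roll-in $\pi$ — which is precisely the content of the remark in \cref{lem:main_momentmatching} that $\mu^{\bar\pi,h}$ depends only on $(s,o)_{m(h):h-1}$ and $a_{m(h):h-2}$ — and the harmless handling of the sign/range of the Bellman-residual integrand.
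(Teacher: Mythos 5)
Your proposal matches the paper's proof essentially step for step: Item~1 is obtained from part~1 of \cref{lem:main_momentmatching} with $\pi=\bar\pi=\pi_f$ and $g(z_h)=(f_h-\Tcal_h f_{h+1})(z_h,\pi_f(z_h))$, and Item~2 from part~2 with $\bar\pi=\pi_f$, the factor $2$ in $\|\xi_h(f)\|\le 2\sqrt{S}$ coming from the range of the Bellman residual. Your explicit handling of the sign/range of $g$ and the remark that the integrand is a fixed function of $z_h$ independent of the roll-in are slightly more careful than the paper's write-up, but the argument is the same.
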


\begin{proof}[Proof of \cref{lem:golf_property}]
For item (1) define $\tilde{\pi}^h_f$ to be the policy that takes actions $a_{1:h-m} \sim \pi_f$ and $a_{m(h):h-1} \sim \nu^{\pi_f,h}$, and let $g : \Zcal_h \rightarrow [0,2]$ be defined as $g(z_h) = (f_h-\Tcal_h f_{h+1})(z_h,\pi_f(z_h))$. Then by item (1) of~\cref{lem:main_momentmatching} we have
\begin{align*}
			\Ecal^{\star}_h(\pi,f) &= \EE[(f_h-\Tcal_h f_{h+1})(z_h,\pi_f(z_h))) \mid a_{1:h-m} \sim \pi , a_{m(h):h} \sim \nu^{\pi_f,h}]\\
			&= \sum_{z_h \in \Zcal_h} P_{\tilde{\pi}^h_f}(z_h)\cdot g(z_h)
			= \sum_{z_h \in \Zcal_h} P_{\pi_f}(z_h)\cdot g(z_h)\\
			&= \EE[(f_h-\Tcal_h f_{h+1})(z_h,\pi_f(z_h))) \mid a_{1:h-1} \sim \pi]
			= \Ecal_h(\pi_f,f),
\end{align*}

Item (2) immediately follows from item (2) of \cref{lem:main_momentmatching} by selecting $g$ as $g(z_h) = (f_h-\Tcal_h f_{h+1})(z_h,\pi_f(z_h))$ and $\bar{\pi}= \pi_g$.
\end{proof}

The following corollary shows that Eluder dimension with respect to $\Ecal^\star$ is upper bounded by $\tilde{\Ocal}(S)$. The proof immediately follows from \cref{lem:golf_property} and \cref{prop:bellman_to_eluder}. 
\begin{corollary}
\label{cor:golf_eludim_S}
	Let $\Pi$ to be set of all $m$-step policies, and define $\Wcal^\star_{\Fcal} = \{ \Ecal^\star(\cdot,f):\Pi \rightarrow [0,2] \mid f \in \Fcal\}$, then
	\begin{equation*}
		\eludim(\Wcal^\star_\Fcal,e) \leq \Ocal\big(S\log[S/\epsilon]\big).
	\end{equation*}
\end{corollary}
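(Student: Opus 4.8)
The plan is to obtain the bound as a direct instantiation of the bilinear structure established in \cref{lem:golf_property} together with the Eluder-dimension estimate for low-rank function classes, \cref{prop:bellman_to_eluder}. Fix $h \in [H]$. In \cref{prop:bellman_to_eluder} take the domain $\Ycal$ to be the class $\Pi$ of all $m$-step policies, and take $\Wcal$ to be $\Wcal^\star_\Fcal = \{\Ecal^\star_h(\cdot,f) \mid f \in \Fcal\}$, so that the ``input'' is a roll-in policy $\pi$ and a member of $\Wcal$ is indexed by $f$. By item (2) of \cref{lem:golf_property}, for every $\pi \in \Pi$ and every $f \in \Fcal$ there are vectors $\zeta_h(\pi), \xi_h(f) \in \RR^S$ with $\Ecal^\star_h(\pi,f) = \langle \zeta_h(\pi), \xi_h(f)\rangle$, $\|\zeta_h(\pi)\| \le 1$, and $\|\xi_h(f)\| \le 2\sqrt S$. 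Hence the hypothesis of \cref{prop:bellman_to_eluder} holds with ambient dimension $d = S$ and $\gamma = \sup_{\pi,f}\|\zeta_h(\pi)\|\,\|\xi_h(f)\| \le 2\sqrt S$.

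Substituting $d = S$ and $\gamma = 2\sqrt S$ into \cref{prop:bellman_to_eluder} yields
\begin{equation*}
\eludim(\Wcal^\star_\Fcal,\epsilon) \le \Ocal\big(1 + S\log[1 + 2\sqrt S/\epsilon^2]\big).
\end{equation*}
It then remains only to simplify the logarithm: since $S \ge 1$ we have $1 + 2\sqrt S/\epsilon^2 \le 3S/\epsilon^2$ in the nontrivial regime $\epsilon \le \sqrt S$, so $\log[1 + 2\sqrt S/\epsilon^2] \le \log 3 + \log S + 2\log(1/\epsilon) = \Ocal(\log[S/\epsilon])$; absorbing the additive constant into the $S\log[S/\epsilon]$ term gives $\eludim(\Wcal^\star_\Fcal,\epsilon) \le \Ocal(S\log[S/\epsilon])$, as claimed.

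I do not expect any substantive obstacle: the corollary is essentially a one-line consequence of two results proved earlier in the excerpt. The only points requiring minor care are (i) correctly identifying the domain of the functions in $\Wcal^\star_\Fcal$ as the policy class $\Pi$, so that \cref{prop:bellman_to_eluder} is applied with the matching notion of ``point'' $y = \pi$ and ``function'' $w = \Ecal^\star_h(\cdot,f)$, and (ii) the elementary algebra collapsing $\log[1 + 2\sqrt S/\epsilon^2]$ into $\Ocal(\log[S/\epsilon])$, where the constants and the relevant range of $\epsilon$ are handled. Both are routine.
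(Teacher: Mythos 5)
Your proof is correct and follows exactly the route the paper intends: the corollary is stated there as an immediate consequence of item (2) of \cref{lem:golf_property} (the rank-$S$ bilinear decomposition with $\|\zeta_h(\pi)\|\le 1$, $\|\xi_h(f)\|\le 2\sqrt S$) plugged into \cref{prop:bellman_to_eluder}, which is precisely your instantiation. The only addition you make is spelling out the elementary simplification of the logarithm, which the paper leaves implicit.
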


Now we are ready to prove \cref{thm:golf_linear_regret}.

\begin{proof}[Proof of \cref{thm:golf_regret}]
	With probability at least $1-2\delta$ the events in \cref{lem:golf_concentration} and \cref{lem:golf_initial_estimation} holds. Under this good event, we proceed in several steps.

%

\paragraph{Step 1. Bounding the optimality gap by the Bellman error.}
\cref{lem:golf_concentration} guarantees that $\forall k \in [K]: \ Q^\star \in \Bcal^k$, this together with optimistic choice of $f^k$ (Line~\ref{line:alg_golf_optimistic} in \cref{alg:golf}), for all $k \in [K]$, we have:
\begin{equation*}
	 V^\star \leq \hat{Q}^\star_1 + \Ocal(\rho_{\rm est}) \leq \hat{f}^k_1 +  \Ocal(\rho_{\rm est}) \leq \EE_{s_1} \big[f^k_1(o_1,\pi_{f^k}(o_1)) \big] +  2\cdot\Ocal(\rho_{\rm est}).
\end{equation*}
It implies that $\sum_{k=1}^K \big(V^\star - V^{\pi^k}\big)
        {\le} \sum_{k=1}^K \EE_{s_1} \big[f^k_1(o_1,\pi_{f^k}(o_1)) \big] - V^{\pi^k} +  \Ocal(K\rho_{\rm est})$. We also have
\begin{align*}
        \EE_{s_1} \big[f^k_1(o_1,\pi_{f^k}(o_1)) \big] - V^{\pi^k}  
        \overset{(i)}{=} \sum_{k=1}^K \sum_{h=1}^H  \Ecal_h(\pi^k, f^k)
        \overset{(ii)}=\sum_{h=1}^H\sum_{k=1}^K \Ecal^{\star}_h(\pi^k,f^k),
\end{align*}
where $(i)$ is by standard policy loss decomposition (e.g., Lemma 1 in \citealt{jiang2017contextual}) and $(ii)$ is due to part (1) of \cref{lem:golf_property} since we have $\pi^k = \pi_{f^k}$. Therefore, we showed
\begin{equation*}
	\sum_{k=1}^K \big(V^\star - V^{\pi^k}\big)
        {\le} \sum_{h=1}^H\sum_{k=1}^K \Ecal^{\star}_h(\pi^k,f^k) + \Ocal(K\rho_{\rm est})
\end{equation*}

\paragraph{Step 2: Utilizing the confidence set.}
By \cref{lem:golf_concentration}, we have 
\begin{align*}
\sum_{i=1}^{k-1} \EE \Big[   \big((f^k_h-\Tcal_h f^k_{h+1})(z_h,a_h)\big)^2  \mid a_{1:h-m}\sim \pi^i, a_{h-m+1:h} \sim \mathrm{unif}(\Acal) \Big] \leq \Ocal(\beta) \quad \forall (k,h) \in [K] \times [H].
\end{align*}
It implies that 
\begin{equation*}
\begin{aligned}
	\sum_{i=1}^{k-1} [\Ecal_h^\star(\pi^i,f^k)]^2
	&\leq \sum_{i=1}^{k-1}\EE\Big[\big((f^k_h-\Tcal_h f^k_{h+1})(z_h,\pi_f(z_h))\big)^2 \mid a_{1:h-m} \sim \pi^i , a_{h-m+1:h} \sim \nu^{\pi_{f^k},h}\Big] \\
        &\leq A^m\sum_{i=1}^{k-1}\EE \Big[  \big((f^k_h-\Tcal_h f^k_{h+1})(z_h,a_h)\big)^2  \mid a_{1:h-m}\sim \pi^i, a_{h-m+1:h} \sim \mathrm{unif}(\Acal) \Big] \\
        &\leq \mathcal{O}(A^m\beta).
\end{aligned}
\end{equation*}
Here the $A^m$ factor arises to change measure from
$\nu^{\pi_{f^k},h}$ to the uniform distribution over actions
$a_{h-m+1:h}$.
\paragraph{Step 3: Utilizing Low-rank Structure.}
From previous step, we know that $\sum_{i=1}^{k-1} [\Ecal_h^\star(\pi^i,f^k)]^2 \leq A^m \beta$, Therefore if we invoke \cref{lem:eluder_main} and \cref{cor:golf_eludim_S} with
\begin{equation*}
\begin{cases}
	\Ycal = \Pi, \quad &\Wcal = \Wcal^\star_\Fcal = \{ \Ecal^\star(\cdot,f):\Pi \rightarrow [0,2] \mid f \in \Fcal\},\\
	\omega = \epsilon/H, \quad & \alpha = \Ocal(A^m \beta), \quad C=2,\\
	
\end{cases}
\end{equation*}
we obtain 
\begin{equation*}
	\frac{1}{K}\sum_{k=1}^K \Ecal_h^\star(\pi^k,f^k) \leq \Ocal \big( \sqrt{\frac{A^mS \log[S/\epsilon]\beta}{K}}+ \epsilon/H \big)
\end{equation*}

\paragraph{Step 4: Putting everything together}
Choosing $\rho_{\rm est} = \Ocal(\epsilon)$ and combining the conclusion of step 1 and step 3, we have
\begin{align*}
         &\frac{1}{K}\sum_{k=1}^K \big(V^\star - V^{\pi^k}\big) \leq \frac{1}{K}\sum_{h=1}^H\sum_{k=1}^K \Ecal^{\star}_h(\pi^k,f^k) \leq \Ocal \big( \sqrt{\frac{H^2 A^mS \log[S/\epsilon]\beta}{K}}+ \epsilon \big) + \Ocal(\epsilon). 
\end{align*}
By definition of $\pi^{\rm out}$, we have
\begin{equation*}
	\begin{aligned}
		V^{\star}-V^{\pi^{\rm out}} &= \frac{1}{K} \sum_{k=1}^K \big(V^\star - V^{\pi^k}\big) \leq \Ocal \big( \sqrt{\frac{H^2A^mS\log[S/\epsilon]\beta}{K}} \big) + \Ocal(\epsilon)\\
		&\overset{(i)}{\leq} \Ocal \Big( \sqrt{\frac{H^2A^mS \log[S/\epsilon]\log[KH\Ncal_{\Gcal}(\rho)/\delta]}{K}+H^2A^mS\log[S/\epsilon]\rho} \Big) + \Ocal(\epsilon)\\
		&\overset{(ii)}{\leq} \Ocal \Big( \sqrt{\frac{H^2A^mS \log[S/\epsilon]\log[KH\Ncal_{\Gcal}(\rho)/\delta]}{K}} \Big) + \Ocal(\epsilon)\\,
	\end{aligned}
\end{equation*}
where $(i)$ is follows from $\beta = c \Big(\log\big[KH\Ncal_{\Gcal}(\rho)/\delta\big] + K\rho \Big)$ as in \cref{lem:golf_concentration} and $(ii)$ is by picking $$\rho = \frac{\epsilon^2}{(H^2A^mS\log[S/\epsilon])}.$$
We need to pick $K$ such that 
\begin{equation*}
	\sqrt{\frac{H^2A^mS \log[S/\epsilon]\log[KH\Ncal_{\Gcal}(\rho)/\delta]}{K}} \leq \Ocal(\epsilon).
\end{equation*}
By simple calculations, one can verify that it suffices to pick
\begin{equation*}
	K \geq \Omega(\frac{H^2SA^m}{\epsilon^2}\cdot \log[HSA^m\Ncal_{\Gcal}(\rho)/(\delta\epsilon)]\cdot \log[S/\epsilon]),
\end{equation*}
which completes the proof.
\end{proof}

\subsection{Proof for \cref{thm:golf_linear_regret}}
\label{app:liner}
The following lemma (akin to part (2) of \cref{lem:golf_property}) shows that $\Ecal^*$ has low rank structure with rank $\dlin$. The proof of \cref{thm:golf_linear_regret} is almost identical to proof of \cref{thm:golf_regret} where the only difference is to use \cref{lem:golf_property_linear} instead of part (2) of \cref{lem:golf_property} resulting in $S$ being replaced by $\dlin$ wherever it has been used.  
\begin{lemma}[akin to part (2) of \cref{lem:golf_property}]
\label{lem:golf_property_linear}
Under \cref{asm:linear}; for any policy $\pi$ and any function $f \in \Fcal$, and any $h \in [H]$, we have $\Ecal^{\star}_h(\pi,f) = \langle \zeta_h(\pi), \xi_h(f) \rangle$ where $\zeta_h(\pi) , \xi_h(f) \in \mathbb{R}^{\dlin}$ satisfy $\norm{\zeta_h(\pi)} \leq 1$ and $\norm{\xi_h(f)} \leq 2\sqrt{\dlin}$.
\end{lemma}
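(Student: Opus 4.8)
The plan is to reduce to part (2) of \cref{lem:main_momentmatching} and then exploit the linear latent dynamics to compress the $S$-dimensional factor into a $\dlin$-dimensional one. Applying \cref{lem:main_momentmatching} (item 2) with $g(z_h) = (f_h - \Tcal_h f_{h+1})(z_h,\pi_f(z_h))$ and $\bar\pi = \pi_f$ gives
\[
\Ecal^\star_h(\pi,f) \;=\; \sum_{s\in\Scal} P_\pi\big(s_{m(h)} = s\big)\, v_f(s), \qquad v_f(s) := \EE_{a_{m(h):h-1}\sim\mu^{\pi_f,h}}\big[g(z_h)\mid s_{m(h)} = s\big],
\]
where, crucially, $v_f$ is independent of $\pi$ because $\mu^{\pi_f,h}$ depends only on $(s,o)_{m(h):h-1}$ and $a_{m(h):h-2}$; moreover $|g|\le 2$ pointwise, so $\|v_f\|_\infty \le 2$.

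First I would rewrite the marginal $P_\pi(s_{m(h)})$ using \cref{asm:linear}. Suppose $m(h)\ge 2$, so $m(h) = h-m+1$; conditioning on the state--action pair at step $m(h)-1$,
\[
P_\pi\big(s_{m(h)} = s\big) \;=\; \sum_{(s',a')} P_\pi\big(s_{m(h)-1}=s',\, a_{m(h)-1}=a'\big)\,\PP_{m(h)-1}(s\mid s',a') \;=\; \big\langle \bmu_{m(h)-1}(s),\, \bar\psi^\pi\big\rangle,
\]
with $\bar\psi^\pi := \EE_\pi[\psi(s_{m(h)-1},a_{m(h)-1})] \in \RR^{\dlin}$. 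Substituting this into the previous display and exchanging the order of summation turns $\Ecal^\star_h(\pi,f)$ into the bilinear form $\langle \zeta_h(\pi),\xi_h(f)\rangle$ with
\[
\zeta_h(\pi) := \bar\psi^\pi \in \RR^{\dlin}, \qquad \xi_h(f) := \sum_{s\in\Scal} v_f(s)\, \bmu_{m(h)-1}(s) \;=\; \int v_f(s)\, \bmu_{m(h)-1}(s) \in \RR^{\dlin}.
\]
When $m(h) = 1$ the marginal $P_\pi(s_1)$ is the policy-independent initial distribution, so $\Ecal^\star_h(\pi,f)$ does not depend on $\pi$; one then takes $\zeta_h(\pi)$ to be a fixed unit vector and folds the remaining scalar, of magnitude at most $2 \le 2\sqrt{\dlin}$, into $\xi_h(f)$.

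It then remains to check the norm bounds. By $\|\psi(s,a)\|\le 1$ and Jensen's inequality, $\|\zeta_h(\pi)\| = \|\EE_\pi[\psi(s_{m(h)-1},a_{m(h)-1})]\| \le \EE_\pi\|\psi(s_{m(h)-1},a_{m(h)-1})\| \le 1$. For $\xi_h(f)$, since $\|v_f/2\|_\infty \le 1$, the normalization $\|\int v\,\bmu_{m(h)-1}\|_2 \le \sqrt{\dlin}$ for $\|v\|_\infty\le 1$ gives $\|\xi_h(f)\| = 2\,\|\int (v_f/2)\,\bmu_{m(h)-1}\| \le 2\sqrt{\dlin}$, as claimed. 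Finally, as noted after the lemma statement, substituting this bilinear decomposition in for part (2) of \cref{lem:golf_property} in the proof of \cref{thm:golf_regret}—feeding it through \cref{prop:bellman_to_eluder} and \cref{lem:eluder_main}—replaces $S$ by $\dlin$ in every bound, yielding \cref{thm:golf_linear_regret}. I do not expect a serious obstacle: the one point needing care—that $v_f$ is a legitimate conditional expectation given $s_{m(h)}$ alone and does not secretly depend on the roll-in $\pi$—is exactly the content of \cref{lem:main_momentmatching}, so the remaining work is just the short linear-algebraic manipulation above together with routine norm bookkeeping.
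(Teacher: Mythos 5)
Your proof is correct and follows essentially the same route as the paper: both decompositions condition on $s_{m(h)}$, use that the moment-matching policy is independent of the roll-in given that state, and factor the marginal $P_\pi(s_{m(h)})$ through $\EE_\pi[\psi(s_{m(h)-1},a_{m(h)-1})]$ and the measures $\bmu_{m(h)-1}$, yielding the same $\zeta_h(\pi)$ and $\xi_h(f)$. Your treatment is in fact slightly more complete than the paper's, since you explicitly verify the norm bounds and handle the $m(h)=1$ edge case.
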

\begin{proof}[Proof of \cref{lem:golf_property_linear}]
	Let $g$ be a function $g:\Zcal_h \rightarrow [0,1]$ and $\tilde{\pi}^h = \pi \circ_{m(h)} \bar{\pi}$. 
	Recall that here $\tilde{\pi}^h$ is defined to take actions
    $a_{1:m(h)-1} \sim \pi$ and $a_{m(h):h-1}\sim \nu^{\bar{\pi},h}$ where
    $\pi$ and $\bar{\pi}$ may not be equal. Since $\mu^{\bar{\pi},h}$ is defined
    to be independent of the past given $s_{m(h)}$ we have the
    factorization
	\begin{align*}
	  \EE_{\tilde{\pi}^{h}}[g(z_h)] 
      &=  \EE_\pi \Big[\int_{s_{m(h)} \in \Scal} \langle \psi_\pi(s_{m(h)-1},a_{m(h)-1}) , \bmu(s_{m(h)})\cdot \EE_{a_{m(h):h-1} \sim \mu^{\bar{\pi},h}}[g(z_h) \mid s_{m(h)}] \Big]\\
      &= \big \langle \EE_\pi \psi_\pi(s_{m(h)-1},a_{m(h)-1}), \int_{s_{m(h)} \in \Scal}\bmu(s_{m(h)})\cdot \EE_{a_{m(h):h-1} \sim \mu^{\bar{\pi},h}}[g(z_h) \mid s_{m(h)}] \big \rangle
	\end{align*}
	We note that $\mu^{\bar{\pi},h}$ only depends on $(s,o)_{m(h):h-1}$ and $a_{m(h):h-2}$, thus the second term is independent of $\pi$ and only depends $g$ and $\bar{\pi}$. Define 
	\begin{equation*}
	\begin{aligned}
			&\zeta_h(\pi) \defeq \EE_\pi \psi_\pi(s_{m(h)-1},a_{m(h)-1}) \in \mathbb{R}^\dlin \quad \textrm{and} \quad \xi_h(g,\bar{\pi})= \int_{s_{m(h)} \in \Scal}\bmu(s_{m(h)})\cdot \EE_{a_{m(h):h-1} \sim \mu^{\bar{\pi},h}}[g(z_h) \mid s_{m(h)}] \in \mathbb{R}^\dlin.
	\end{aligned}
	\end{equation*}	
	Picking $g$ as $g(z_h) = (f_h-\Tcal_h f_{h+1})(z_h,\pi_f(z_h))$ and $\bar{\pi}= \pi_g$ completes the proof.
\end{proof}

\section{On $H$-Step Decodable POMDPs}
\label{app: H step decodable POMDP}
\begin{algorithm*}[t]
\caption{\textbf{IS-RL}: Importance sampling for Reinforcement Learning} \label{alg:H step decodable POMDP}
 \begin{algorithmic}[1]
 \STATE \textbf{Initialize:}  $N$ number of samples, policy class $\Pi$,
 \STATE \textbf{Collect:} $N$ trajectories $\{o^{(t)}_h,a^{(t)}_h,r^{(t)}_h\}_{h=1}^H$ for $t \in [N]$ by executing the uniform policy $a^{(t)}_h \sim \unif(\Acal)$. 
 \STATE \label{line: is for pomdps}For any $\pi\in \Pi$ calculate its empirical value
 \begin{align*}
 \widehat{V}^\pi = \frac{1}{N}\sum_{t=1}^N \prod_{h=1}^H\rbr{\frac{\pi(a^{(t)}_h \mid z_h^{(t)})}{1/A}} \cdot \rbr{\sum_{h=1}^H r_h^{(t)}}
\end{align*}  
\STATE \textbf{Output}  $\widehat{\pi}\in \arg\max_{\pi\in \Pi} \widehat{V}^\pi.$
 \end{algorithmic}
\end{algorithm*}

In this section, we show that there exists an algorithm that returns
an $\epsilon$ optimal policy for any $H$-step decodable POMDP with
sample complexity which is only polynomial in $\abr{\Ocal}$, the
cardinality of the observation space. To do so, we construct a policy
class $\Pi$ that contains the optimal policy and has cardinality
bounded by $\abr{\Pi} \leq O\rbr{H(SA)^{2HSOA}}$ and we use this
policy class in a standard importance-sampling procedure. The
procedure is formally specified~\pref{alg:H step decodable POMDP},
and~\cref{prop: H step dec POMDP} follows immediately from
\cref{corr:policy_class_size} and \cref{lem:is_algo}.


\paragraph{Constructing the policy class $\Pi$ via recurrent function class.} 
Let $\Bcal_h$ denote the set of all mappings of the form $b_h:
\Scal_{h-1}\times \Acal_{h-1} \times \Ocal_h \rightarrow
\Scal_h$. This class represents all mappings from the latent state at
the previous time step, action at the previous time step, and current
observation to the latent state at the current time step. We call them
\emph{belief operators}.

We show that the latent state at time step $h$ is decodable from the
tuple $(o_h,s_{h-1},a_{h-1})$. In other words, we can write
$\phi^\star(z_{h}) =
b_h^\star(\phi^\star(z_{h-1}), a_{h-1},o_h)$ for some
belief operator $b_h^\star \in \Bcal_h$.
This relation is established in the following lemma.

\begin{lemma}\label{lem: decodability given o s a}
For each $h \in [H]$ there exists $b_h^\star \in \Bcal_h$ such
that for all reachable histories $z_h$ we have $\phi^\star(z_h) =
b_h^\star(\phi^\star(z_{h-1}), a_{h-1}, o_h)$.
\end{lemma}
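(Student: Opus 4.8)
The plan is to build $b_h^\star$ by hand. First observe that the suffix $z_h \in \Zcal_h$ is a deterministic function of the triple $(z_{h-1}, a_{h-1}, o_h)$ — this is immediate from the definition of $\Zcal_h$ — so the only real content of the lemma is that $\phi^\star_h(z_h)$ depends on $z_{h-1}$ \emph{only through} the decoded state $\phi^\star_{h-1}(z_{h-1})$. Concretely, for each triple $(s,a,o)$ that can actually occur — i.e.\ such that there is a reachable history $z_h$ with $\phi^\star_{h-1}(z_{h-1}) = s$, $a_{h-1} = a$, and $o_h = o$ — I would set $b_h^\star(s,a,o) := \phi^\star_h(z_h)$, and set $b_h^\star$ to an arbitrary fixed state on triples that never occur. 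With this definition the identity $\phi^\star_h(z_h) = b_h^\star(\phi^\star_{h-1}(z_{h-1}), a_{h-1}, o_h)$ holds tautologically on every reachable $z_h$, \emph{provided $b_h^\star$ is well defined}. (For $h=1$ the statement is trivial, taking $b_1^\star$ to simply read off $\phi^\star_1(o_1)$.)

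The heart of the proof is therefore well-definedness: if $z_h$ and $\tilde z_h$ are reachable with $\phi^\star_{h-1}(z_{h-1}) = \phi^\star_{h-1}(\tilde z_{h-1}) =: s_{h-1}$, $a_{h-1} = \tilde a_{h-1}$, and $o_h = \tilde o_h$, then $\phi^\star_h(z_h) = \phi^\star_h(\tilde z_h)$. I would argue this by a trajectory-splicing construction. Fix reachable length-$H$ trajectories $\tau$ and $\tilde\tau$ whose step-$h$ suffixes are $z_h$ and $\tilde z_h$. By \cref{asm:nstep_decode}, the state of $\tau$ at step $h-1$ is $s_{h-1}$, the state of $\tilde\tau$ at step $h-1$ is also $s_{h-1}$, and the state of $\tilde\tau$ at step $h$ is $\tilde s_h := \phi^\star_h(\tilde z_h)$. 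Reading off the relevant factors of $\tilde\tau$'s reachability gives $\PP(\tilde s_h \mid s_{h-1}, a_{h-1}) > 0$ (the transition $\tilde\tau$ takes at step $h-1$, using $\tilde a_{h-1} = a_{h-1}$) and $\OO_h(o_h \mid \tilde s_h) > 0$ (the emission at step $h$, using $\tilde o_h = o_h$).

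Now splice: take the states, observations, and actions of $\tau$ up to step $h-1$ — this pins down exactly the observation/action entries appearing in $z_h$ at times $<h$, including $a_{h-1}$ — then transition to $\tilde s_h$ at step $h$, emit $o_h$, and append the observations, actions, and states of $\tilde\tau$ from step $h$ onward. Call the result $\tau^\dagger$. Every emission and transition factor of $\tau^\dagger$ is strictly positive: the prefix factors come from $\tau$, the spliced transition $\PP(\tilde s_h \mid s_{h-1}, a_{h-1})$ and emission $\OO_h(o_h \mid \tilde s_h)$ were just shown positive, and all suffix factors (including the first transition out of $\tilde s_h$) come from $\tilde\tau$. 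Hence $\tau^\dagger$ is reachable, and by construction its step-$h$ suffix equals $z_h$ (suffixes contain only observations and actions, which we kept identical to those of $\tau$ through time $h-1$, together with $o_h$). Applying \cref{asm:nstep_decode} to $\tau^\dagger$ forces its step-$h$ state, which is $\tilde s_h$, to equal $\phi^\star_h(z_h)$; thus $\phi^\star_h(z_h) = \tilde s_h = \phi^\star_h(\tilde z_h)$, which establishes well-definedness and hence the lemma.

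The one step needing genuine care — everything else being bookkeeping — is verifying that $\tau^\dagger$ is an honest reachable trajectory (all emission and transition probabilities strictly positive) and that its length-$m$ suffix at step $h$ is \emph{literally} $z_h$, since only then may the decodability assumption be invoked to identify the latent state. The existence of the witnessing trajectories $\tau, \tilde\tau$ is part of what ``reachable history'' means, and the fact that $z_h$ is determined by $(z_{h-1}, a_{h-1}, o_h)$ is immediate from the definition of $\Zcal_h$.
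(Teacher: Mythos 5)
Your proof is correct, but it takes a genuinely different route from the paper's. The paper argues probabilistically: it writes the posterior $\PP(s_h \mid z_h)$ by marginalizing over $s_{h-1}$, uses decodability to collapse the sum to the single term $s_{h-1} = \phi^\star(z_{h-1})$, and after a Bayes-rule cancellation concludes $\PP(s_h \mid z_h) = \PP(s_h \mid o_h, \phi^\star(z_{h-1}), a_{h-1})$; since the left side is a point mass at $\phi^\star(z_h)$, it defines $b_h^\star(s_{h-1},a_{h-1},o_h)$ as the unique $s_h$ with $\PP(s_h \mid o_h, s_{h-1}, a_{h-1}) \neq 0$. You instead reduce the lemma to a well-definedness claim and prove it by trajectory splicing: given two reachable histories agreeing in $(\phi^\star(z_{h-1}), a_{h-1}, o_h)$, you glue the prefix of one to the suffix of the other through the shared latent state $s_{h-1}$, check that every emission and transition factor of the spliced trajectory is positive (which is exactly the paper's definition of reachability), and invoke \cref{asm:nstep_decode} on the spliced trajectory to force the two decoded states to coincide. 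Both arguments rest on the same substance --- the Markov property of the latent dynamics lets the pre-$(h-1)$ history be forgotten given $s_{h-1}$ --- but yours is more elementary (pure support/positivity reasoning, no conditional-probability manipulation) and is more scrupulous about the ``reachable'' qualifier, whereas the paper's Bayes computation implicitly conditions on events of positive probability and leaves the policy-independence of $\PP(\cdot \mid o_{1:h-1}, a_{1:h-1})$ tacit, in exchange for a shorter derivation and a cleaner closed-form description of $b_h^\star$. The one step your argument genuinely needs --- that the spliced trajectory is reachable and its step-$h$ history is literally $z_h$ --- you identify and verify correctly, so there is no gap.
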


Using the belief operator class we can design a policy class that
contains the optimal policy for any $H$-step decodable POMDP. Given a
decoder $\vec{b} := (b_1,\ldots,b_H) \in
\Bcal_1\times\ldots\times\Bcal_H$ and a trajectory $z_H$ (or a partial
trajectory $z_h$), the predicted state is updated recursively as
$\hat{s}_1 = b_1(o_1)$, $\hat{s}_h = b_h(\hat{s}_{h-1},
a_{h-1},o_h)$. Then we can define $\Pi_{\vec{b}} := \{\pi : \pi(a_h
\mid z_h) = \pi_h(a_h \mid \hat{s}_h)\}$, where here implicitly we are
updated $\hat{s}_h$ using $\vec{b}$. Then we can take $\Pi =
\bigcup_{\vec{b} \in \vec{\Bcal}} \Pi_b$. For this class we have the following corollary.

\begin{corollary}\label{corr:policy_class_size}
We have $|\Pi| \leq (SA)^{2SHOA}$ and for any $H$-step decodable POMDP
$\pi^\star \in \Pi$.
\end{corollary}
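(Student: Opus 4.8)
The plan is to bound the two quantities separately, neither of which requires new ideas beyond \cref{lem: decodability given o s a}. For the cardinality, I would first count the belief-operator tuples $\vec{b}=(b_1,\dots,b_H)\in\vec{\Bcal}$. For $h\ge 2$ the map $b_h$ has domain $\Scal_{h-1}\times\Acal_{h-1}\times\Ocal_h$ of size at most $SAO$ and codomain $\Scal_h$ of size at most $S$, so $|\Bcal_h|\le S^{SAO}$; the same bound holds trivially for $b_1:\Ocal_1\to\Scal_1$. Hence $|\vec{\Bcal}|=\prod_{h=1}^H|\Bcal_h|\le S^{SAOH}$. For a fixed $\vec b$, a policy in $\Pi_{\vec b}$ is pinned down by choosing, at each of the $H$ steps, a deterministic map $\pi_h:\Scal_h\to\Acal$, giving $|\Pi_{\vec b}|\le A^{SH}$. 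Multiplying, $|\Pi|\le S^{SAOH}A^{SH}\le (SA)^{SAOH}(SA)^{SH}=(SA)^{SAOH+SH}$, and since $O,A\ge 1$ we have $SH\le SAOH$, so $SAOH+SH\le 2SHOA$ and $|\Pi|\le (SA)^{2SHOA}$.

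For the containment $\pi^\star\in\Pi$, I would invoke \cref{lem: decodability given o s a} to obtain, for each $h\in[H]$, a belief operator $b_h^\star\in\Bcal_h$ with $\phi^\star_h(z_h)=b_h^\star(\phi^\star_{h-1}(z_{h-1}),a_{h-1},o_h)$ on all reachable histories $z_h$, and set $\vec b^\star=(b_1^\star,\dots,b_H^\star)$. A short induction on $h$, carried out along any fixed reachable trajectory, then shows that the predicted state $\hat s_h$ computed from $\vec b^\star$ equals the true latent state $s_h=\phi^\star_h(z_h)$: the base case is $\hat s_1=b_1^\star(o_1)=\phi^\star_1(z_1)=s_1$, and the inductive step is $\hat s_h=b_h^\star(\hat s_{h-1},a_{h-1},o_h)=b_h^\star(\phi^\star_{h-1}(z_{h-1}),a_{h-1},o_h)=\phi^\star_h(z_h)=s_h$, using the induction hypothesis together with \cref{lem: decodability given o s a} (and the fact that $z_{h-1}$ is the appropriate prefix of $z_h$, so the trajectory is internally consistent).

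It remains to exhibit an optimal policy lying in $\Pi_{\vec b^\star}$. Here I would use the standard fact (cf.\ the proof of \cref{prop: megastate MDP} and \cref{asm:nstep_decode}) that an optimal policy of the latent MDP — which may be taken deterministic and stationary, $\tilde\pi^\star_h:\Scal_h\to\Acal$ — induces an optimal $m$-step policy for the POMDP via $\pi^\star_h(z_h)=\tilde\pi^\star_h(\phi^\star_h(z_h))$. Since $\phi^\star_h(z_h)=\hat s_h$ on all reachable trajectories, this $\pi^\star$ chooses its action as a function of $\hat s_h$ alone, i.e.\ $\pi^\star\in\Pi_{\vec b^\star}\subseteq\Pi$, which completes the proof.

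The counting step is routine; the real content is in the containment argument, and the main obstacle I anticipate is handling the two subtleties there. First, decodability is assumed only on reachable trajectories, so the induction must be run path-by-path along reachable histories rather than over all of $\Zcal_h$. Second, one must be careful to replace the abstract optimal $m$-step policy guaranteed by \cref{asm:nstep_decode} by the specific representative induced by an optimal latent-MDP policy — this is precisely the policy that factors through the decoded state and hence through $\hat s_h$. Getting both of these points right is where the argument needs care.
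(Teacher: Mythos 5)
Your proposal is correct and follows essentially the same route as the paper: the identical counting argument ($|\Bcal_h|\le S^{SOA}$, $|\Pi_{\vec b}|\le A^{SH}$, multiply) and the same containment argument via \cref{lem: decodability given o s a} plus the fact that an optimal policy can be taken to factor through the latent state. The only difference is that you spell out the induction along reachable trajectories and the choice of optimal-policy representative, which the paper compresses into one sentence.
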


\paragraph{Importance Sampling Procedure for $H$-step POMDPs.} 
\pref{alg:H step decodable POMDP} describes a standard importance
sampling approach for policy learning in POMDPs, which is essentially
the same as the trajectory tree method
of~\citet{kearns1999approximate}. A standard analysis of importance
weighting using Bernstein's inequality and a uniform convergence
argument yield the following lemma. As the result is quite standard,
we omit the proof here.



\begin{lemma}\label{lem:is_algo}
Fix any $\epsilon,\delta>0$ and let $N = \Omega\rbr{HA^H
  \log\rbr{\abr{\Pi}/\delta}/\epsilon^2}$. Then with probability at
least $1-\delta$,~\pref{alg:H step decodable POMDP} returns a policy
$\widehat{\pi} \in \Pi$ such that
\begin{align*}
\max_{\pi\in \Pi} V^\pi \leq V^{\widehat{\pi}} +\epsilon.
\end{align*}
\end{lemma}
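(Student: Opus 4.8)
The plan is to prove that, for each $\pi\in\Pi$, the importance-sampling estimate $\widehat{V}^\pi$ computed in \cref{alg:H step decodable POMDP} is unbiased for $V^\pi$, that it concentrates at rate $\sqrt{A^H/N}$, and then --- after a union bound over the finite class $\Pi$ --- to invoke the standard ``compare to the in-class optimum'' argument.

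First I would verify unbiasedness by a change of measure. The $N$ trajectories are generated by the uniform behavior policy $\mu_0$ with $\mu_0(a\mid z)=1/A$, and every target policy $\pi\in\Pi$ interacts with the same environment (initial distribution, emissions $\OO$, transitions $\PP$); consequently the likelihood ratio between the trajectory law under $\pi$ and the law under $\mu_0$ is exactly $\prod_{h=1}^H \pi(a_h\mid z_h)/(1/A)$, since every environment factor cancels. This weight depends only on the observable history --- each $\pi\in\Pi$ computes its belief state $\widehat{s}_h$ from observables --- so it is computable from the data, and $\EE_{\mu_0}\bigl[\widehat{V}^\pi\bigr]=\EE_{\mu_0}\bigl[\prod_h \tfrac{\pi(a_h\mid z_h)}{1/A}\sum_h r_h\bigr]=\EE_\pi\bigl[\sum_h r_h\bigr]=V^\pi$.

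Next I would bound the fluctuation of $\widehat{V}^\pi$. Writing a single trajectory's contribution as $X=WR$ with weight $W=\prod_h \pi(a_h\mid z_h)/(1/A)\in[0,A^H]$ and return $R=\sum_h r_h\in[0,1]$ (using $\sum_h r_h\le 1$ almost surely), we get $X\in[0,A^H]$ and, crucially, $\EE[X^2]=\EE_{\mu_0}[W^2R^2]\le \EE_{\mu_0}[W^2R]\le A^H\,\EE_{\mu_0}[WR]=A^HV^\pi\le A^H$, where we used $R\le 1$ and $W\le A^H$ to self-bound. Hence $\Var(X)\le A^H$, so Bernstein's inequality yields $\Pr(|\widehat{V}^\pi-V^\pi|>t)\le 2\exp(-cNt^2/A^H)$ for $t\in(0,1]$ and an absolute constant $c$. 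A union bound over $\Pi$ then shows that, with probability at least $1-\delta$, $|\widehat{V}^\pi-V^\pi|\le \sqrt{A^H\log(2|\Pi|/\delta)/(cN)}$ for all $\pi\in\Pi$ simultaneously, which is at most $\epsilon/2$ once $N=\Omega\bigl(HA^H\log(|\Pi|/\delta)/\epsilon^2\bigr)$ (the extra $H$ in the stated $N$ is harmless slack).

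Finally, letting $\pi^\dagger\in\arg\max_{\pi\in\Pi}V^\pi$, on the good event we have $V^{\pi^\dagger}-V^{\widehat{\pi}}=(V^{\pi^\dagger}-\widehat{V}^{\pi^\dagger})+(\widehat{V}^{\pi^\dagger}-\widehat{V}^{\widehat{\pi}})+(\widehat{V}^{\widehat{\pi}}-V^{\widehat{\pi}})\le \tfrac\epsilon2+0+\tfrac\epsilon2=\epsilon$, the middle term being nonpositive since $\widehat{\pi}$ maximizes $\widehat{V}$ over $\Pi$; this is exactly the claimed inequality. The only real subtlety I anticipate is the variance step --- bounding $\Var(X)$ by $A^H$ rather than by its trivial range-squared value $A^{2H}$ is what keeps the sample complexity polynomial in $A^H$ instead of $A^{2H}$; everything else is routine. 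Chained with \cref{corr:policy_class_size} (so that $\pi^\star\in\Pi$ and $\log|\Pi|\le 2SHOA\log(SA)$), this then yields \cref{prop: H step dec POMDP}.
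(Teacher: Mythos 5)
Your proof is correct and follows exactly the route the paper indicates: the paper omits the proof of this lemma, describing it only as ``a standard analysis of importance weighting using Bernstein's inequality and a uniform convergence argument,'' and your writeup fleshes out precisely that argument (unbiasedness via change of measure, the second-moment bound $\EE[W^2R^2]\le A^H$ feeding into Bernstein, a union bound over $\Pi$, and the standard comparison to the in-class maximizer). Your observation that the stated $N$ carries an extra harmless factor of $H$ (thanks to the normalization $\sum_h r_h\le 1$) is also accurate.
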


\subsection{Proofs}
We now turn to the proofs of \cref{lem: decodability given o s a} and \cref{corr:policy_class_size}.

\begin{proof}[Proof of~\cref{lem: decodability given o s a}]
By the decodability assumption, for any $z_h = (o_{1:h}, a_{1:h-1})$
such that $\sup_\pi \PP^\pi[z_h] > 0$, it holds that
\begin{align*}
\PP(s_h \mid z_h) = \delta\rbr{\phi^\star(z_h)}.
\end{align*}
On the other hand, it holds that
\begin{align}
\PP(s_h \mid z_h) = \frac{\sum_{s_{h-1}}\PP(s_h , o_h, s_{h-1} \mid o_{h-1:1}, a_{h-1:1})}{\sum_{s_{h-1}}\PP(o_h, s_{h-1} \mid o_{h-1:1}, a_{h-1:1})}. \label{eq: H step decodability}
\end{align}
By the POMDP model assumption and decodability the numerator is 
also given by,
\begin{align*}
\PP(s_h , o_h, s_{h-1} \mid o_{h-1:1}, a_{h-1:1})
 = \PP(s_h,o_h\mid s_{h-1},a_{h-1})\delta(s_{h-1} = \phi^\star(z_{h-1})).
\end{align*}
Similarly, the denominator is given by
\begin{align*}
\PP(o_h, s_{h-1} \mid o_{h-1:1}, a_{h-1:1}) \PP(s_h\mid s_{h-1},a_{h-1}) =
\sum_{\bar{s}_{h}} \PP(\bar{s}_h,o_h\mid s_{h-1},a_{h-1}) \delta(s_{h-1} = \phi^\star(z_{h-1})).
\end{align*}
Plugging this back into equation~\eqref{eq: H step decodability} we obtain
\begin{align}
\PP(s_h \mid z_h) \nonumber 
& = \frac{\sum_{s_{h-1}} \PP(s_h,o_h\mid s_{h-1},a_{h-1}) \delta(s_{h-1} = \phi^\star(z_{h-1}))}{\sum_{s_{h-1}} \sum_{\bar{s}_{h}} \PP(\bar{s}_h,o_h\mid s_{h-1},a_{h-1}) \delta(s_{h-1} = \phi^\star(z_{h-1}))} \nonumber \\
& = \frac{\PP(s_h,o_h\mid \phi^\star(z_{h-1}) ,a_{h-1})}{\sum_{\bar{s}_{h}} \PP(\bar{s}_h,o_h \mid \phi^\star(z_{h-1}),a_{h-1})} \nonumber \\
& = \frac{\PP(s_h\mid o_h,\phi^\star(z_{h-1}) ,a_{h-1})\PP(o_h \mid \phi^\star(z_{h-1}) ,a_{h-1})}{\sum_{\bar{s}_{h}} \PP(\bar{s}_h\mid \phi^\star(z_{h-1}),a_{h-1})\PP(o_h \mid \phi^\star(z_{h-1}) ,a_{h-1})} \nonumber \\
& = \frac{\PP(s_h\mid o_h,\phi^\star(z_{h-1}) ,a_{h-1})}{\sum_{\bar{s}_{h}} \PP(\bar{s}_h\mid \phi^\star(z_{h-1}),a_{h-1})} \nonumber \\
& = \PP(s_h\mid o_h,\phi^\star(z_{h-1}) ,a_{h-1}).\nonumber
\end{align}
Recall that $\PP(s_h \mid z_h) = \delta\rbr{s_h = \phi^\star(z_h)}$ by the decodability assumption. Hence, it holds that
\begin{align*}
\PP(s_h\mid o_h,\phi^\star(z_{h-1}) ,a_{h-1}) = \delta(s_h = \phi^\star(z_h)).
\end{align*}
Therefore for any reachable $z_h$, with $s_{h-1} =
\phi^\star(z_{h-1})$ we take $b_h^\star(s_{h-1},a_{h-1},o_{h})$ to be
the unique $s_h$ for which $\PP(s_h\mid o_h,s_{h-1},a_{h-1}) \ne 0$
and if this does not completely specify $b_h^\star$, we complete can
complete it arbitrarily.
\end{proof}

\begin{proof}[Proof of~\cref{corr:policy_class_size}]
The fact that $\pi^\star \in \Pi$ follows directly from \cref{lem:
  decodability given o s a}, since $\vec{b}^\star \in \Bcal$ and for
any H-step POMDP the optimal action depends only on the state. As for
the size of $\Pi$ observe that for each $h$ we have $|\Bcal_h|\leq
S^{SOA}$ and so $|\vec{\Bcal}| \leq S^{HSOA}$. Finally, for each
$\vec{b} \in \vec{\Bcal}$ we have $|\Pi_{\vec{b}}| = A^{SH}$. Taken
together we have $|\Pi|\leq (SA)^{HSOA}$ as desired. 
\end{proof}

\section{Proof for \cref{prop: bellman rank of an m step dec pomdp is large}}
\label{app:brank}
Here we construct an instance of a $2$-step decodable POMDP in which
the bellmank rank scales with the number of observations $O$. We
further show that the OLIVE algorithm has sample complexity that
scales polynomially with $O$, thus motivating our new algorithmic
techniques. We believe a similer construction will also show that this
model does not fall into either the bilinear class or Bellman-Eluder
frameworks~\citep{du2021bilinear,jin2021bellman}.

The key idea is to use a construction inspired by the Hadamard
matrix. Let $O = 2^s$ for some natural number $s$ and $\Ocal = \{1,\ldots,O\}$. Then, there exist
sets $S_1, \ldots, S_{O-1} \subset \Ocal$ such that:
\begin{align}
\forall i: |S_i| = O/2, \quad \textrm{and} \quad \forall i \ne j: |S_i\cap S_j| = |S_i \cap \bar{S_j}| = O/4 \label{eq:set_system}
\end{align}
The existence of these can be verified by the existence and
orthogonality of Hadamard matrices in dimension $O = 2^s$. Indeed, if
we define $\{v_i\}_{i=0}^O \subset \{\pm 1\}^O$ such that $v_0 =
\mathbf{1}$ and $v_i$ is the $\pm 1$ indicator vector for set
$S_i$. Then the first property above is equivalent to $v_i^\top v_0 =
0$ for all $i \ne 0$ while the second property is equivalent to
\begin{align*}
  \forall i \ne j\in\{1,\ldots,O\}  \sum_{k}\one\{v_i[k] = +1\} v_j[k] = 0
\end{align*}
We claim that these two properties are satisfied if the vectors $v$
are the columns of a Hadamard matrix. The first follows directly from
orthogonality. For the second, since $v_i^\top v_j = 0$ and $v_j^\top v_0 = 0$ both by
orthogonality, we have
\begin{align*}
  v_i^\top v_j =0 \Rightarrow &\underbrace{\sum_{k}\one\{v_i[k] = +1\} v_j[k]}_{=:A_{ij}} - \underbrace{\sum_k \one\{v_i[k] = -1\} v_j[k]}_{=: B_{ij}} = 0\\
  v_j^\top v_0 = 0 \Rightarrow &\sum_{k}\one\{v_i[k] = +1\} v_j[k] + \sum_k \one\{v_i[k] = -1\} v_j[k] = 0.
\end{align*}
Thus we have $A_{ij} + B_{ij} = A_{ij} - B_{ij} = 0$ which implies
that $A_{ij} = 0$. So we have established the existence of $O-1$ sets
satisfying~\pref{eq:set_system}.

Let us now put this construction to use in a 2-step decodable
POMDP. We consider a $H=2$, three state POMDP with initial state $s_0$
and two states $s_1, s_2$ reachable at time $h=2$. We have: $\OO(\cdot
\mid s_0) = \textrm{Unif}(\{1,\ldots,O\})$ while $\OO(\cdot \mid s_1)
= \OO(\cdot \mid s_2) = \delta(\{\bot\})$. In words, from the initial
state we see an observation uniformly at random, while from $s_1$ or
$s_2$ we see no observation. The dynamics are such that taking $a_1$
from $s_0$ reaches $s_1$ and taking $a_2$ from $s_0$ reaches $s_2$. Only
a single action $a_1$ is available from $s_1$ or $s_2$ and it enjoys
reward $R(s_1,a_1) = 1/2$, $R(s_2,a_1) = 3/4$. Clearly this POMDP is
$2$-step decodable since the first state is always decodable and the
previous action uniquely determines the second state.

We have a function class $\Fcal$ of 2-step candidate $Q$
functions. The functions are $\Fcal := \{Q^\star\}\cup
\{f_i\}_{i=1}^{O-1}$ where each $f_i$ is associated with a set $S_i$
from the above Hadamard construction. These functions are defined as
\begin{align*}
f_i(oa_1) = \one\{o \in S_i\}, \quad f_i(oa_2) = 3/4, \quad f_i(oa_1\bot a_1) = \one\{o \in S_i\}, \quad f_i(oa_2\bot a_1) = 3/4
\end{align*}
It is easy to very that these functions have zero bellman error at the
first time step, that is
\begin{align*}
\forall (o,a): f_i(oa) = f_i(oa\bot a_1)
\end{align*}
On the other hand, $f_i$ has very high bellman error at the second
time step, since it never correctly predicts the reward for state
$s_1$. In particular we have $\EE_{d_2^{\pi_{f_i}}}[f_i(oa\bot a_1) -
  r] = 1/4$, since $\pi_{f_i}$ visits states $s_1$ on half
of the observations and every time it does it overpredicts the reward
by $1/2$. However, observe that
\begin{align*}
\EE_{d_2^{\pi_{f_i}}}[ f_j(oa\bot a_1) - r ] = \frac{1}{O} \sum_{o \in S_i} \one\{o \in S_j\}(1 - 1/2) + \one\{o \notin S_j\}(0 - 1/2) = 0,
\end{align*}
where the last identity uses~\pref{eq:set_system}. Thus we see that we
have embedded an $(O-1)\times (O-1)$-sized identity matrix inside of
the Bellman error matrix at time $2$, which shows that the Bellman
rank is $\Omega(O)$.

Note that the \olive~ algorithm itself will also incur $\textrm{poly}(O)$
sample complexity in this instance. This is because the value
predicted by $f_i$ at the starting state, namely $\EE[ \max_a f(oa)
]$, is $1/2 + 3/8$ which is greater than $V^\star=3/4$. Thus \olive~
will enumerate over the $f_i$ functions, eliminating one at a time and
incurring a $\textrm{poly}(O)$ sample complexity.



\end{document}